\documentclass{article}

\usepackage[preprint]{neurips_2024}

\usepackage[utf8]{inputenc} %
\usepackage[T1]{fontenc}    %
\usepackage{hyperref}       %
\usepackage{url}            %
\usepackage{booktabs}       %
\usepackage{amsfonts}       %
\usepackage{nicefrac}       %
\usepackage{microtype}      %
\usepackage{xcolor}         %

\usepackage{graphicx}
\usepackage{subcaption}
\usepackage{booktabs} %
\usepackage{float}
\usepackage[mathscr]{eucal}

\usepackage{amsmath}
\usepackage{amssymb}
\usepackage{mathtools}
\usepackage{amsthm}

\usepackage[capitalize,noabbrev]{cleveref}

\theoremstyle{plain}

\usepackage[inline]{enumitem}
\usepackage{multirow}

\usepackage{wrapfig, framed, caption}
\usepackage{multirow}
\hypersetup{colorlinks,linkcolor={blue},citecolor={blue},urlcolor={black}}  

\usepackage[textsize=tiny]{todonotes}

\usepackage{stmaryrd}
\pdfoutput=1

\newtheorem{prop}{Proposition}
\newtheorem{thm}{Theorem}

\newtheorem{lem}{Lemma}
\newtheorem{defn}{Definition}
\DeclareMathOperator{\Id}{Id}
\DeclareMathOperator{\mymin}{min}

  \newcommand*{\defeq}{\coloneqq}

\DeclareMathOperator*{\argmax}{arg\,max} 
\DeclareMathOperator*{\argmin}{arg\,min} 
 
\DeclareMathOperator{\prox}{prox}

\definecolor{mygreen}{rgb}{0.0, 0.7, 0.0}
\newcommand{\R}{\mathbb{R}}

\newcommand{\bff}{\mathbf{f}}
\newcommand{\ba}{\mathbf{a}}
\newcommand{\bx}{\mathbf{x}}
\newcommand{\bX}{\mathbf{X}}
\newcommand{\bY}{\mathbf{Y}}
\newcommand{\bg}{\mathbf{g}}
\newcommand{\bb}{\mathbf{b}}
\newcommand{\by}{\mathbf{y}}

\newcommand{\bz}{\mathbf{z}}
\newcommand{\bw}{\mathbf{w}}
\newcommand{\bu}{\mathbf{u}}

\newcommand{\bp}{\mathbf{p}}

\newcommand{\eps}{\varepsilon}

\usepackage{cancel}
\usepackage[normalem]{ulem}

\usepackage[capitalize,noabbrev]{cleveref}

\usepackage[textsize=tiny]{todonotes}

\include{math_commands}

\title{Learning Elastic Costs to Shape Monge Displacements}

\author{%
  Michal Klein\\
  Apple\\
  \texttt{michalk@apple.com}\\
  \And
  Aram-Alexandre Pooladian\\
  NYU\\
  \texttt{aram-alexandre.pooladian@nyu.edu} \\
  \And
  Pierre Ablin\\
  Apple\\
  \texttt{p\_ablin@apple.com}\\
  \AND
  Eugène Ndiaye\\
  Apple\\
  \texttt{e\_ndiaye@apple.com}\\
  \And
  Jonathan Niles-Weed\\
  NYU\\
  \texttt{jnw@cims.nyu.edu}\\
  \And
  Marco Cuturi\\
  Apple\\
  \texttt{cuturi@apple.com}\\
}

\begin{document}

\maketitle

\begin{abstract}
Given a source and a target probability measure supported on $\mathbb{R}^d$, the \citeauthor{Monge1781} problem aims for the most efficient way to map one distribution to the other.
This efficiency is quantified by defining a \textit{cost} function between source and target data. 
Such a cost is often set by default in the machine learning literature to the squared-Euclidean distance, $\ell^2_2(\bx,\by)=\tfrac12\|\bx-\by\|_2^2$.
The benefits of using \textit{elastic costs}, defined through a regularizer $\tau$ as $c(\bx,\by)=\ell^2_2(\bx,\by)+\tau(\bx-\by)$, was recently highlighted in~\citep{cuturi2023monge}. Such costs shape the \textit{displacements} of \citeauthor{Monge1781} maps $T$, i.e., the difference between a source point and its image $T(\bx)-\bx$, by giving them a structure that matches that of the proximal operator of $\tau$.
In this work, we make two important contributions to the study of elastic costs: \textit{(i)} For any elastic cost, we propose a numerical method to compute \citeauthor{Monge1781} maps that are provably optimal. This provides a much-needed routine to create synthetic problems where the ground truth OT map is known, by analogy to the \citeauthor{Bre91} theorem, which states that the gradient of any convex potential is always a valid \citeauthor{Monge1781} map for the $\ell_2^2$ cost; \textit{(ii)} We propose a loss to \textit{learn} the parameter $\theta$ of a parameterized regularizer $\tau_\theta$, and apply it in the case where $\tau_{A}(\bz)=\|A^\perp \bz\|^2_2$. This regularizer promotes displacements that lie on a low dimensional subspace of $\mathbb{R}^d$, spanned by the $p$ rows of $A\in\mathbb{R}^{p\times d}$. We illustrate the success of our procedure on synthetic data, generated using our first contribution, in which we show near-perfect recovery of $A$'s subspace using only samples. We demonstrate the applicability of this method by showing predictive improvements on single-cell data tasks.

\looseness=-1
\end{abstract}

\section{Introduction}
Finding efficient ways to map a distribution of points onto another is a low-level task that plays a crucial role across many machine learning (ML) problems.
Optimal transport (OT) theory~\citep{santambrogio2015optimal} has emerged as a tool of choice %
to solve such challenging matching problems in e.g.., scientific domains~\citep{schiebinger2019optimal,pmlr-v119-tong20a,bunne2023}.
We focus in this work on a crucial OT subtask, namely the numerical resolution of the \citeauthor{Monge1781} problem, which aims, on the basis of high-dimensional source and target data $(\bx_1,\dots,\bx_n)$ and $(\by_1,\dots,\by_m$), to recover a map $T:\mathbb{R}^d\rightarrow\mathbb{R}^d$ that is simultaneously \textit{(i)} a \emph{pushfoward} map, in the sense that $T$ applied on source samples recovers the distribution of target samples; \textit{(ii)} \emph{efficient}, in the sense that $T(\bx_i)$ is not too far, on average from $\bx_i$. The notion of efficiency can be made precise by parameterizing an OT problem with a real-valued cost function $c$ that compares a point $\bx$ and its mapping via $c(\bx,T(\bx))\in\mathbb{R}$. 
\paragraph{Challenges in the estimation of OT maps. }
Estimating OT maps is hindered, in principle, by the curse of dimensionality~\citep{hutter2021minimax}. This has led practitioners to seek OT solvers that leverage inductive biases. A simple workaround is to reduce the dimension of input data, using for instance a variational auto encoder~\citep{bunne2023}, or learning hyperplane projections jointly with OT estimation~\citep{paty2019subspace,niles2022estimation,lin2020projection,pmlr-v139-huang21e,lin2021projection}. More generally, one can modify the \citeauthor{Monge1781} problem, by relaxing the push-forward constraint \textit{(i)}, which results in an unbalanced formulation of the \citeauthor{Monge1781} problem~\citep{liero2018optimal,chizat2018unbalanced,yang2018scalable,eyring2024unbalancedness,choi2023generative}, or by considering alternative choices for $c$ in \textit{(ii)}. While OT theory is rife with many cost structures~\citep{ambrosio2003existence,ma2005regularity,lee2012new,figalli2010ma,figalli2010mass}, that choice has comparatively received far less attention in machine learning, where in a majority of applications the cost function is the squared-Euclidean distance over $\R^d$.
\paragraph{Cost structure impacts map structure.}
While some attempts at incorporating Riemannian metrics within OT in ML have been proposed~\citep{cohen2021riemannian, grange2023computational,pooladian2023neural}, computational challenges restrict these approaches to low-dimensional manifolds. We argue in this work that costs that are translation invariant (TI), $c(\bx,\by)\defeq h(\bx-\by)$ with $h:\mathbb{R}^d\rightarrow \mathbb{R}$, offer practitioners a more reasonable middle ground, since a large part of numerical schemes developed for the cost can be extended to TI costs, both for static and dynamic formulations of OT (see e.g., \citet[Chap.7]{villani2009optimal} or \citet{liu2022rectified}).
In particular, 
we propose to focus on \textit{elastic costs}~\citep{zou2005regularization} of the form $h(\bz) = \tfrac12\|\bz\|^2 + \gamma\tau(\bz)$, with $\gamma >0$ and $\tau:\R^d\rightarrow \R$ a regularizer. \citet{cuturi2023monge} show that they result in \citeauthor{Monge1781} maps whose displacements satisfy $T(\bx) - \bx  = - \prox_\tau\circ\nabla f(\bx)$, for some potential $f$. 
In other words, choosing an elastic cost ensures that the displacement of \citeauthor{Monge1781} maps w.r.t. $h$ are shaped by the \emph{proximal operator} of $\tau$.
\begin{figure}[t]
\centering
\includegraphics[width=.245\textwidth]{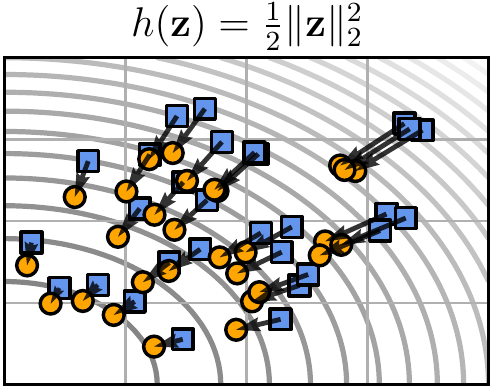}
\includegraphics[width=.245\textwidth]{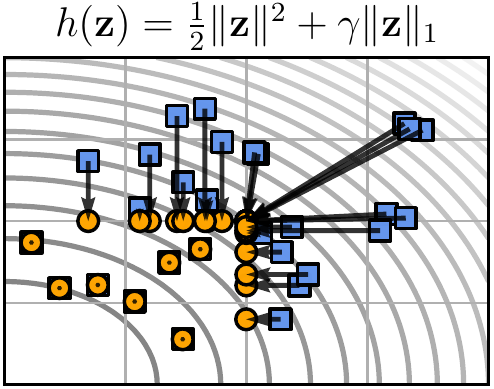}
\includegraphics[width=.245\textwidth]{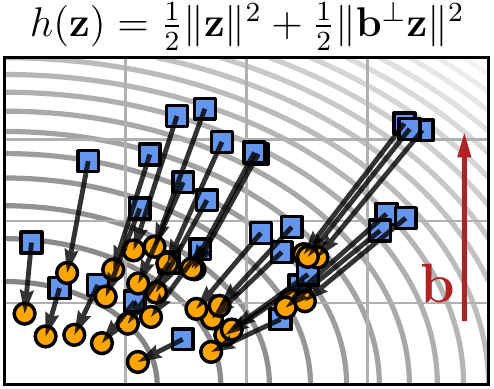}
\includegraphics[width=.245\textwidth]{figures/h_transform_2d_l2_mat_high.pdf}\\
\begin{subfigure}[b]{0.70\textwidth}
    \centering
    \includegraphics[width=\linewidth]{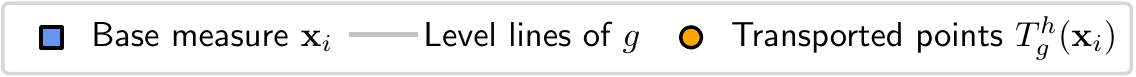}
\end{subfigure}%
    \caption{Illustration of ground truth optimal transport maps with different costs $h$, for the same base function $g$. In this experiment, $g$ is the negative of a random ICNN with 2-dimensional inputs, 3 layers and hidden dimensions of sizes $[8,8,8]$. All plots display the level lines of $g$. The optimal transport map $T_g^h$ are recomputed four times using Prop.~\ref{prop:pushforward}, with four different costs $h$, displayed above each plot. \textit{(left)} When $h$ is the usual $\ell_2^2$ cost, we observe a typical OT map that follows from each $\bx_i$, minus the gradient of $g$. With a sparsity-inducing regularizer \textit{(middle-left)}, we obtain sparse displacements: most arrows follow either of the two canonical axes, some points do not move at all. \textit{(middle-right)} With a cost that penalizes displacements that are orthogonal to a vector $\mathbf{b}$, we obtain displacements that push further to the bottom than in the \textit{(left)} plot. When the penalization strength is increased \textit{(right)}, the displacements are increasingly parallel to $\mathbf{b}$. When $\mathbf{b}$ is not known beforehand, and both source and target measures are given, we present a general procedure that proposes to learn adaptively such a parameter in \S~\ref{sec:learning_structure}.}
    \label{fig:groundtruth_potential_illustration} %
\end{figure}
\paragraph{Contributions.}
While elastic costs offer the promise of obtaining OT maps with prescribed structure, by augmenting the cost with a regularizer $\tau$ with a particular proximal operator, our current understanding of such costs does not extend beyond the early experimentation by \citet{cuturi2023monge}. This should be contrasted with the fine-grained characterization provided by~\citeauthor{Bre91} that a map is optimal for the squared-Euclidean cost if and only if it is the gradient of a convex function. Additionally, using a regularizer raises the question of choosing its hyperparameters. To this end:
\begin{itemize}[leftmargin=.3cm,itemsep=.0cm,topsep=0cm,parsep=2pt]
\item We show in \S~\ref{sec: ground_truth_displacement} that OT maps can be generated for any elastic cost $h$ by running a proximal gradient descent scheme, through the proximal operator of $\tau$, on a suitable objective. This results in, to our knowledge, the first visualization of \citeauthor{Monge1781} maps that extend beyond the usual grad-convex \citeauthor{Bre91} maps for $\ell^2_2$ costs (see Figure~\ref{fig:groundtruth_potential_illustration}), as well as synthetic generation in high-dimensions;
\item We introduce \textit{subspace} elastic costs in \S\,\ref{sec:subspace}, which promote displacements occurring in a low-dimensional subspace spanned by the line vectors of a matrix $A$, $A\in\mathbb{R}^{p\times d}, AA^T=I_p$, setting $\tau(\bz) \defeq \|A^\perp \bz \|_2^2$. We prove sample-complexity estimates for the MBO estimator~\citep{cuturi2023monge} with this cost, and link it with the spiked transport model~\citep{niles2022estimation}.
\item Since the choice of the regularizer $\tau$ in the elastic cost gives rise to a diverse family of OT maps, whose structural properties are dictated by the choice of regularizer, we consider \emph{parametrized families} $\tau_\theta$, and propose in \S~\ref{sec:learning_structure} a loss to select adaptively a suitable $\theta$. 
\item We illustrate all above results, showing MBO estimator performance, \textit{recovery} of $A$ on the basis of i.i.d.~samples, in both synthetic (using our first contribution) and single-cell data tasks, where we demonstrate an improved predictive ability compared to baseline estimators that do not learn $A$.
\end{itemize}

\section{Background: Optimal transport with Elastic Costs}\label{sec:back}
\paragraph{Monge Problem.}
Let $\mathcal{P}_2(\mathbb{R}^d)$ be the set of probability measures with finite second-order moment. We consider in this work cost functions $c$ of the form $c(\bx,\by)\defeq h(\bx-\by)$, where $h:\R^d\rightarrow \R$ is strictly convex and, to simplify a few computations, symmetric, i.e., $h(\bz)=h(-\bz)$. Given two measures $\mu,\nu\in\mathcal{P}_2(\mathbb{R}^d)$, the \citeauthor{Monge1781} problem \citeyearpar{Monge1781} seeks a map $T:\R^d\rightarrow \R^d$ minimizing an average transport cost, as quantified by $h$, of the form:
 \begin{equation}\label{eq:monge}
     T^\star \defeq \argmin_{T_\sharp\mu = \nu} \int_{\R^d} h(\bx-T(\bx))\, \mu(\mathrm{d}\bx)
 \end{equation}
Because the set of admissible maps $T$ is not convex, solving \eqref{eq:monge} requires taking a detour that involves relaxing \eqref{eq:monge} into the so-called Kantorovich dual and semi-dual formulations, involving respectively two functions (or only one in the case of the semi-dual)\citep[\S 1.6]{santambrogio2015optimal}:
\begin{align}\label{eq:dual}
     (f^\star, g^\star) \defeq \argmax_{\substack{f,g: \R^d\rightarrow \R \\f(\bx)+ g(\by)\leq h(\bx-\by)}} \int_{\R^d} f \mathrm{d}\mu +\int_{\R^d} g\, \mathrm{d}\nu\, = \argmax_{\substack{f: \R^d\rightarrow \R,\\ f\text{ is } h\text{-concave}}} \int_{\R^d} f \mathrm{d}\mu +\int_{\R^d} \bar{f}^h \mathrm{d}\nu\,
\end{align}
A function $f$ is said to be $h$-concave if there exists a function $g$ such that $f$ is the $h$-transform of $g$, i.e., $f=\bar{g}^h$, where %
for any function $g:\R^d\rightarrow \R$, we define its $h$-transform as 
\begin{equation}\label{eq:htransf}
    \bar{g}^h(\bx)\defeq \inf_{\by} h(\bx-\by) - g(\by).  
\end{equation}
 We recall a fundamental theorem in optimal transport~\citep[\S1.3]{santambrogio2015optimal}. Assuming the optimal, $h$-concave, potential for~\eqref{eq:dual}, $f^\star$, is differentiable at $\bx_0$ (this turns out to be a mild assumption since $f^\star$ is a.e. differentiable when $h$ is), we have~\citep{wilfrid1996geometry}:
\begin{equation}\label{eq:brenier}
T^\star(\bx)=\bx - (\nabla h)^{-1}(\nabla f^\star(\bx)) = \bx - \nabla h^*\circ \nabla f^\star(\bx)\,,
\end{equation}
where the convex conjugate of $h$ reads: $h^*(\bw)\defeq \sup_{\bz} \langle\bz,\bw\rangle - h(\bz)\,.$
The classic \citeauthor{Bre91} theorem \citeyearpar{Bre91}, which is by now a staple of OT estimation in machine learning~\citep{korotin2019wasserstein,makkuva2020optimal,korotin2021neural,bunne2023} through input-convex neural networks~\citep{amos2017input}, is a particular example, stating for $h=\frac{1}{2}\|\cdot\|^2_2$, that $T(\bx)=\bx - \nabla f^\star(\bx_0)$, since in this case, $\nabla h=(\nabla h)^{-1}=\Id$, see \citep[Theorem 1.22]{santambrogio2015optimal}.

\paragraph{Maps and Elastic Costs.}
\citet{cuturi2023monge} consider TI costs w.r.t. a regularizer $\tau$: for $\gamma>0$ they study \emph{elastic costs} of the form
\begin{equation}\label{eq:h-def}
h(\bz)=\tfrac{1}{2}\|\bz\|_2^2 + \gamma \tau(\bz),
\end{equation}
and show that the resulting \citeauthor{Monge1781} map is shaped by the proximal operator of $\tau$: 
\begin{equation}\label{eq:brenier2}
T^\star(\bx) = \bx - \prox_{\gamma\tau} \circ \nabla f^\star(\bx) \,, \text{ where } \prox_{\gamma\tau}(\bw) \defeq \argmin_{\bz}\frac12\|\bw-\bz\|^2 + \gamma\tau(\bz)\,.
\end{equation}
 
\paragraph{The MBO Estimator.} While the result above is theoretical, in the sense that is assumes knowledge of an optimal $f^\star$, the MBO estimator proposes to evaluate \eqref{eq:brenier2} with an approximation of $f^\star$, using samples from $\mu$ and $\nu$. %
The estimation of optimal potential functions can be carried out using entropy-regularized transport~\citep{cuturi2013sinkhorn}, resulting in optimal entropic potentials~\citep{pooladian2021entropic}. This involves choosing a regularization strength $\varepsilon>0$, and numerically solving the following dual problem using the~\citeauthor{Sinkhorn64} algorithm \citep[\S~4.2]{Peyre2019computational}:
\begin{align}\label{eq:finitedual}
(\bff^\star, \bg^\star) = D^\star(\mathbf{X},\mathbf{a},\mathbf{Y},\mathbf{b}; h, \varepsilon)
\defeq\argmax_{\bff\in\R^n,\bg\in\R^m} \langle\bff, \ba \rangle + \langle\bg, \bb\rangle  - \varepsilon \langle e^{\frac{\bff}{\varepsilon}}, \mathbf{K} e^{\frac{\bg}{\varepsilon}}\rangle\,.
\end{align}

where $\mathbf{K}_{ij} = [\exp(-h(\bx_i-\by_j)/\varepsilon)]_{ij}$. The entropy-regularized optimal transport matrix associated with that cost $h$ and on those samples can be derived directly from these dual potentials, as
$P^\star(\mathbf{X},\mathbf{a},\mathbf{Y},\mathbf{b}; h, \varepsilon) \in \R^{n\times m}$ with the $(i,j)$ entries
\begin{equation}\label{eq:finiteprimal}
    P^\star(\mathbf{X},\mathbf{a},\mathbf{Y},\mathbf{b}; h, \varepsilon)_{i,j} = \exp\left(\frac{\bff^\star_i + \bg^\star_j -h(\bx_i-\by_j)}{\varepsilon}\right)\,.
\end{equation}
We now introduce the soft-minimum operator, and its gradient, defined for any vector $\bu\in\R^q$ as 
\begin{align*}
    \mymin_\varepsilon(\bu) \defeq - \varepsilon \log \sum_{l=1}^q e^{-\bu_l/\varepsilon},\, \text{and }
    \nabla\!\mymin_\varepsilon(\bu) = \left[\frac{e^{-\bu_k/\varepsilon}}{\sum_{l=1}^q e^{-\bu_l/\varepsilon}}\right]_k.
\end{align*}
Using vectors $(\bff^\star, \bg^\star)$, we can define estimators $f_\varepsilon$ and $g_\varepsilon$ for the optimal dual function ($f^\star, g^\star$):
\begin{equation}\label{eq:entpotentials}\hat{f}_\varepsilon : \bx \mapsto \mymin_\varepsilon([h(\bx-\by_j) - \bg^\star_j]_j)\;,\quad
\hat{g}_\varepsilon : \by \mapsto \mymin_\varepsilon([h(\bx_i-\by) + \bff^\star_i]_i)\,.\end{equation}
Plugging these approximations into~\eqref{eq:brenier2} forms the basis for the MBO estimator,
\begin{defn}[MBO Estimator]\label{def:MBO}
Given data, an elastic cost function $h=\ell^2_2+\gamma \tau$ and solutions to Eq.\eqref{eq:finitedual}, the MBO map estimator \citep{pooladian2021entropic,cuturi2023monge} is given by:
\begin{equation}\label{eq:mbo}
T_\varepsilon(\bx)=\bx - \prox_{\gamma\tau}\Bigl(\bx + \sum_{j=1}^m \bp_{j}(\bx) \left(\gamma\nabla\tau(\bx-\by_j) - \by_j\right)\Bigr),.
\end{equation}
where $\bp(\bx)\defeq\nabla\!\mymin_\varepsilon([h(\bx-\by_j) - \bg^\star_j]_j)$ is a probability vector.
\end{defn}

\section{On Ground Truth Monge Maps for Elastic Costs}\label{sec: ground_truth_displacement}
Our strategy to compute examples of ground truth displacements for any elastic cost $h$ rests on the following theorem, which is a direct consequence of~\citep[Theorem 1.17]{santambrogio2015optimal}. 
\begin{prop}
\label{prop:pushforward}
Let $\mu$ be a measure in $\mathcal{P}(\mathbb{R}^d)$. Consider a potential $g:\mathbb{R}^d\rightarrow \mathbb{R}$ and its $h$-transform as defined in \eqref{eq:htransf}. Additionally, set $T_g^h\defeq \Id - \nabla h^\star\circ \nabla \bar{g}^h$. Then $T_g^h$ is the OT \citeauthor{Monge1781} map for cost $h$ between $\mu$ and $(T_g^h)_\sharp \mu$.
\end{prop}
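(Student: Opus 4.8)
The plan is to verify the two defining features of an optimal map separately: the marginal constraint is immediate by construction, and optimality follows from the sufficiency direction of Kantorovich duality. Write $\nu \defeq (T_g^h)_\sharp\mu$, so $T_g^h$ is admissible for the \citeauthor{Monge1781} problem between $\mu$ and $\nu$ by definition; it remains to show that it minimizes $\int h(\bx - T(\bx))\,\mu(\mathrm{d}\bx)$ over all $T$ with $T_\sharp\mu=\nu$. To this end I would exhibit a feasible pair of potentials for the dual \eqref{eq:dual} associated with $(\mu,\nu)$ that is \emph{tight} along the graph of $T_g^h$; weak duality then pins down optimality.

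For feasibility, take $f \defeq \bar g^h$ on the source side and $g$ itself on the target side. By the very definition of the $h$-transform in \eqref{eq:htransf} as an infimum, $f(\bx) = \inf_{\by} h(\bx-\by)-g(\by) \le h(\bx-\by)-g(\by)$ for all $\bx,\by$, i.e.\ $f(\bx)+g(\by)\le h(\bx-\by)$, so $(f,g)$ is dual-feasible. (A mild growth/measurability assumption on $g$ — automatic in the elastic setting, where $h$ is coercive of quadratic order and $\tau\ge 0$ — keeps the infimum finite and the integrals below well defined.)

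For tightness, fix a point $\bx$ at which $f=\bar g^h$ is differentiable; since $h$-transforms are locally semiconcave when $h$ is smooth, this holds for $\mu$-a.e.\ $\bx$ as soon as $\mu$ is absolutely continuous — the same standing hypothesis under which \eqref{eq:brenier} and \eqref{eq:brenier2} are stated. Coercivity of $\by\mapsto h(\bx-\by)-g(\by)$ ensures the infimum defining $f(\bx)$ is attained at some $\by^\star$. Then $f(\bx')\le h(\bx'-\by^\star)-g(\by^\star)$ for all $\bx'$ with equality at $\bx'=\bx$, so the smooth map $\bx'\mapsto h(\bx'-\by^\star)-g(\by^\star)$ touches $f$ from above at $\bx$, whence $\nabla f(\bx)=\nabla h(\bx-\by^\star)$. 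As $h$ is strictly convex and essentially smooth, $\nabla h$ is a bijection with $(\nabla h)^{-1}=\nabla h^\star$, so $\by^\star = \bx - \nabla h^\star(\nabla f(\bx)) = T_g^h(\bx)$ (in particular $\by^\star$ is unique, so $T_g^h$ is single-valued at differentiability points). Hence $f(\bx)+g(T_g^h(\bx)) = h(\bx - T_g^h(\bx))$ for $\mu$-a.e.\ $\bx$.

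Combining the two facts closes the argument by weak duality: for any $T$ with $T_\sharp\mu=\nu$,
\[
\int h(\bx-T(\bx))\,\mathrm{d}\mu \;\ge\; \int f\,\mathrm{d}\mu + \int g\,\mathrm{d}\nu \;=\; \int\big(f(\bx)+g(T_g^h(\bx))\big)\,\mathrm{d}\mu \;=\; \int h(\bx-T_g^h(\bx))\,\mathrm{d}\mu,
\]
using feasibility together with $T_\sharp\mu=\nu$ for the inequality, $(T_g^h)_\sharp\mu=\nu$ for the first equality, and tightness for the second. Thus $T_g^h$ is optimal between $\mu$ and $\nu$, which is precisely the sufficiency half of the duality argument underlying \citep[Theorem 1.17]{santambrogio2015optimal}. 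I expect the only genuinely delicate points to be the regularity bookkeeping — existence of the minimizer $\by^\star$ and $\mu$-a.e.\ differentiability of $\bar g^h$ — which is why the statement implicitly inherits the hypotheses ($\mu$ absolutely continuous, $h$ strictly convex, smooth, coercive) already in force for \eqref{eq:brenier}.
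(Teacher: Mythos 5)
Your proof is correct and takes essentially the same route as the paper, which proves the proposition simply by citing \citep[Theorem 1.17]{santambrogio2015optimal}: your feasibility-plus-tightness argument with the dual pair $(\bar g^h, g)$ is exactly the sufficiency half of the duality argument underlying that theorem, written out in full with the regularity caveats (attainment of $\by^\star$, $\mu$-a.e.\ differentiability of $\bar g^h$) made explicit. The only minor adjustment needed is that for nonsmooth regularizers (e.g.\ $\tau=\ell_1$) the identity $\nabla f(\bx)=\nabla h(\bx-\by^\star)$ should be stated with the subdifferential of $h$, concluding via $\bx-\by^\star=\nabla h^*(\nabla f(\bx))$, which only requires differentiability of $h^*$ and hence strict convexity of $h$, consistent with the paper's standing assumptions.
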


The ability to compute an OT map for $h$ therefore hinges on the ability to solve numerically the $h$-transform \eqref{eq:htransf} of a potential function $g$. This can be done, provably, as long as $g$ is concave and smooth, and $\mathrm{prox}_{\tau}$ is available, as shown in the following result

\begin{prop}\label{prop: prox_descent}
    Assume $g$ is concave, $L$-smooth, and that $\lambda < 2/ L$.
    Setting $\by=\bx$ and iterating
    \begin{equation}\label{eq:pgd}
\by \leftarrow \bx + \prox_{\frac{\lambda \gamma}{\lambda+1} \tau}\left(\frac{\by - \bx + \lambda \nabla g(\by)}{1 + \lambda}\right)\end{equation}
 converges to a point $\by^\star(\bx)= \argmin_\by h(\bx - \by) - g(\by)$.
Furthermore, we have
\begin{align}
\label{eq:grad_from_pgd}
    \bar{g}^h(\bx) = h(\bx-\by^\star(\bx)) - g(\by^\star(\bx))\,,\quad\text{and}\quad
    \nabla \bar{g}^h(\bx) = \nabla h(\bx-\by^\star(\bx)).
\end{align}
\end{prop}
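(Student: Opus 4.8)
The plan is to recognize the iteration \eqref{eq:pgd} as a proximal gradient descent step on the strongly convex objective $\bz \mapsto h(\bx-\bz) - g(\bz)$, suitably reparameterized so that the quadratic part of $h$ is absorbed into the proximal step. First I would write the $h$-transform objective as $\Phi_\bx(\bz) \defeq h(\bx-\bz) - g(\bz) = \tfrac12\|\bx-\bz\|^2 + \gamma\tau(\bx-\bz) - g(\bz)$, and substitute $\bz = \bx - \bv$ (so $\bv$ is the displacement), giving $\Psi_\bx(\bv) = \tfrac12\|\bv\|^2 + \gamma\tau(\bv) - g(\bx-\bv)$. Since $g$ is concave, $-g(\bx-\bv)$ is convex, and the quadratic term makes $\Psi_\bx$ $1$-strongly convex; thus the minimizer $\bv^\star(\bx)$ exists and is unique, and $\by^\star(\bx) = \bx - \bv^\star(\bx)$ is the unique minimizer of $\Phi_\bx$, which is exactly $\argmin_\by h(\bx-\by)-g(\by)$.

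Next I would split $\Psi_\bx$ into a smooth part $S(\bv) = \tfrac12\|\bv\|^2 - g(\bx-\bv)$ and a nonsmooth part $\gamma\tau(\bv)$. The gradient $\nabla S(\bv) = \bv + \nabla g(\bx-\bv)$ is Lipschitz: the $\bv$ term contributes $1$ and $\bv \mapsto \nabla g(\bx-\bv)$ is $L$-Lipschitz since $g$ is $L$-smooth, so $S$ has $(1+L)$-Lipschitz gradient — wait, more carefully one wants to check this matches the stepsize condition $\lambda < 2/L$. The cleaner route is to run proximal gradient directly on $\Phi_\bx$ in the $\by$ variable: $\Phi_\bx(\by) = \tfrac12\|\bx-\by\|^2 - g(\by) + \gamma\tau(\bx-\by)$, smooth part $q(\by) = \tfrac12\|\bx-\by\|^2 - g(\by)$ with $\nabla q(\by) = \by - \bx - \nabla g(\by)$, nonsmooth part $r(\by) = \gamma\tau(\bx-\by)$. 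A proximal gradient step with stepsize $\lambda$ reads $\by^+ = \prox_{\lambda r}(\by - \lambda\nabla q(\by))$; expanding the argument gives $\by - \lambda(\by-\bx-\nabla g(\by)) = (1-\lambda)\by + \lambda\bx + \lambda\nabla g(\by)$, and a short computation (completing the square, using $\prox_{\lambda r}$ with $r(\cdot) = \gamma\tau(\bx - \cdot)$, which relates to $\prox$ of $\tau$ by a change of variable and a rescaling) reduces to exactly \eqref{eq:pgd} with the stated constant $\tfrac{\lambda\gamma}{\lambda+1}$. I would present this algebra compactly.

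For convergence, the key point is that $q$ is $1$-strongly convex in $\by$ (the $\tfrac12\|\bx-\by\|^2$ term) while $-g$ is merely convex, and $\nabla q$ is $(1+L)$-Lipschitz? No — here concavity and $L$-smoothness of $g$ give that $-g$ has gradient that is $L$-Lipschitz, and the quadratic contributes Lipschitz constant $1$; but the correct observation for the stepsize $\lambda < 2/L$ is that after the reparameterization the relevant smooth part to which standard proximal-gradient convergence applies has curvature controlled only by $L$ (the quadratic $\tfrac12\|\cdot\|^2$ being folded into the prox), so the descent lemma requires $\lambda < 2/L$. I would invoke the standard convergence theorem for proximal gradient descent on a sum of a convex $L$-smooth function and a convex lower-semicontinuous function (e.g.\ \citet[]{beck2017first}): under $\lambda < 2/L$ the iterates converge to the unique minimizer, which is $\by^\star(\bx)$. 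The main obstacle — and the step I would be most careful about — is getting the reparameterization and the prox-rescaling identity exactly right so that the smooth part genuinely has Lipschitz constant $L$ (not $L+1$), justifying the sharp condition $\lambda<2/L$; this requires the identity $\prox_{\lambda r}(\bu)$ with $r = \gamma\tau(\bx-\cdot)$ equals $\bx - \prox_{\lambda\gamma\tau}(\bx-\bu)$ together with absorbing the residual quadratic.

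Finally, \eqref{eq:grad_from_pgd} follows from general envelope/Danskin-type reasoning: once $\by^\star(\bx)$ is identified, $\bar{g}^h(\bx) = \Phi_\bx(\by^\star(\bx)) = h(\bx-\by^\star(\bx)) - g(\by^\star(\bx))$ is immediate from the definition \eqref{eq:htransf}. For the gradient, since $\bar{g}^h$ is an infimal convolution–type object, $\bar g^h(\bx) = \inf_\by h(\bx-\by)-g(\by)$, differentiating under the envelope theorem (the inner minimization is smooth and the minimizer unique) gives $\nabla\bar g^h(\bx) = \nabla_\bx\big[h(\bx-\by)-g(\by)\big]\big|_{\by=\by^\star(\bx)} = \nabla h(\bx-\by^\star(\bx))$, with the term from the dependence of $\by^\star$ on $\bx$ vanishing by first-order optimality. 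I would state this using Danskin's theorem, noting $\bx\mapsto h(\bx-\by)-g(\by)$ is convex and the argmin is a singleton, so $\bar g^h$ is differentiable with the claimed gradient.
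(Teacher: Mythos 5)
Your high-level route is the same as the paper's: read \eqref{eq:pgd} as a forward--backward (proximal gradient) iteration on $\by\mapsto h(\bx-\by)-g(\by)$, invoke a standard convergence theorem for stepsizes $\lambda<2/L$, and obtain \eqref{eq:grad_from_pgd} from the definition of the $h$-transform plus an envelope/infimal-convolution gradient argument (the paper does exactly this by citing Parikh--Boyd for the prox identity, Beck--Teboulle/Rockafellar for convergence, and Bauschke--Combettes Prop.~18.7 for the identities). Your treatment of existence/uniqueness of $\by^\star(\bx)$ via strong convexity and of \eqref{eq:grad_from_pgd} via first-order optimality is fine.

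However, the splitting you actually commit to does not produce the stated iteration, so the central identification is not established. With smooth part $q(\by)=\tfrac12\|\bx-\by\|^2-g(\by)$ and nonsmooth part $r(\by)=\gamma\tau(\bx-\by)$, the step $\prox_{\lambda r}\bigl(\by-\lambda\nabla q(\by)\bigr)$ evaluates, using $\prox_{\lambda r}(\bu)=\bx-\prox_{\lambda\gamma\tau}(\bx-\bu)$ and symmetry of $\tau$, to
\begin{equation*}
\by^{+}=\bx+\prox_{\lambda\gamma\tau}\bigl((1-\lambda)(\by-\bx)+\lambda\nabla g(\by)\bigr),
\end{equation*}
which is not \eqref{eq:pgd}: there is no $\tfrac{1}{1+\lambda}$ shrinkage of the argument and the prox parameter is $\lambda\gamma$, not $\tfrac{\lambda\gamma}{1+\lambda}$ (the two iterations agree only at $\lambda=0$); moreover this split has smooth Lipschitz constant $1+L$, which is inconsistent with the condition $\lambda<2/L$ you need to justify. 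The correct split --- which you gesture at in your convergence paragraph but never carry out, and which contradicts the split you wrote down --- puts \emph{only} $-g$ in the smooth part (gradient $L$-Lipschitz, convex) and the entire $h(\bx-\cdot)$, quadratic included, in the prox term. Then $\by^{+}=\prox_{\lambda h(\bx-\cdot)}\bigl(\by+\lambda\nabla g(\by)\bigr)=\bx-\prox_{\lambda h}\bigl(\bx-\by-\lambda\nabla g(\by)\bigr)$, and completing the square in $\prox_{\lambda h}$ for $h=\tfrac12\|\cdot\|^2+\gamma\tau$ gives $\prox_{\lambda h}(\bw)=\prox_{\frac{\lambda\gamma}{1+\lambda}\tau}\bigl(\tfrac{\bw}{1+\lambda}\bigr)$; combined with $\prox_{\sigma\tau}(-\bu)=-\prox_{\sigma\tau}(\bu)$ (which uses the symmetry of $h$, hence of $\tau$, assumed in \S\ref{sec:back} --- a point your write-up never invokes), this reproduces \eqref{eq:pgd} exactly, and standard forward--backward convergence with stepsize in $(0,2/L)$ applies. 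So the gap is in execution rather than idea: as written, your ``short computation'' identifies \eqref{eq:pgd} with a different algorithm, and the sharp stepsize condition is only recovered once the quadratic is genuinely absorbed into the prox.
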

\begin{proof}
    Because $h$ is the sum of a function $\gamma \tau$ and a quadratic term, one has, thanks to \citep[\S2.1.1]{parikh2014proximal}, that the proximal operator of $\lambda h$ can be restated in terms of the proximal operator of $\tau$ as shown in the equation. The convergence of iterates~\eqref{eq:pgd} follows from \citep[Thm. 1]{beck2009fast} or \citep[Thm. 1]{rockafellar1976monotone}. The final identities are given by \citep[Prop. 18.7]{bauschke2011convex}.
\end{proof}

In summary, the proximal operator of $\tau$ is the only thing needed to implement iterations~\eqref{eq:pgd}, and, as a result, the $h$-transform of a suitable concave potential. We can then plug the solution in~\eqref{eq:grad_from_pgd} to compute the quantities of interest numerically, plugged back again in a proximal operator to compute the pushforward $T_g^h$.
In practice, we use the JAXOPT~\citep{jaxopt_implicit_diff} library to integrate these steps in our differentiable pipeline seamlessly.
We illustrate numerically in $2$d the resulting transport maps for different choices of regularizer $\tau$ in Fig.~\ref{fig:groundtruth_potential_illustration}. 
In this illustration, we use the same base function $g$, so we see clearly that the choice of cost has a drastic impact on the form of the transport map.

\begin{figure*}
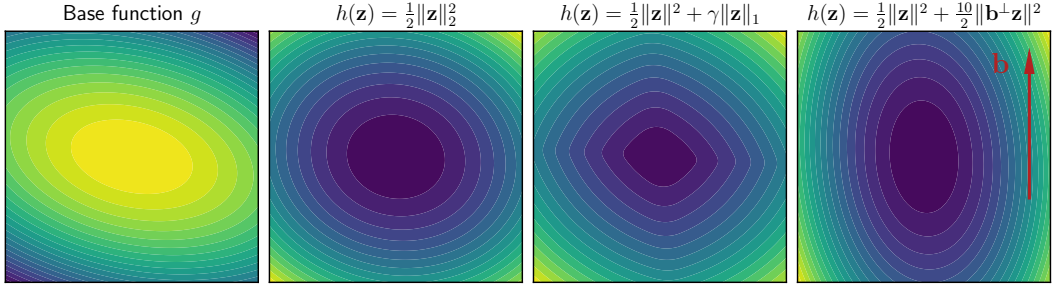

\centering
\includegraphics[width=.245\textwidth]{figures/h_levels_2d_l1_hFalse.pdf}
\includegraphics[width=.245\textwidth]{figures/h_levels_2d_l2_hTrue.pdf}
\includegraphics[width=.245\textwidth]{figures/h_levels_2d_l1_hTrue.pdf}
\includegraphics[width=.245\textwidth]{figures/h_levels_2d_l2_mat_high_hTrue.pdf}
    \caption{Illustration of the $h$-transform computation in $2$d. \textit{(left)}: base concave potential $g$, here a negative quadratic. \textit{(other figures)} Level lines of the corresponding $h$-transform $\bar{g}^h$ for different choices of $h$. The $h$-transform is computed using the iterations described in Prop.~\autoref{prop: prox_descent}.}
    \label{fig:h_transforms_illustration} 
\end{figure*}

\section{Subspace Elastic Costs}\label{sec:subspace}
Recall that for a rank-$p$ matrix $A\in\R^{p\times d}$, $p\leq d$, the projection matrix that maps it to its orthogonal is $A^\perp = I-A^T(AA^T)^{-1}A$. When $A$ lies in the Stiefel manifold (i.e. $AA^T =I$), we have the simplification $A^\perp = I-A^T A$. This results in the Pythagorean identity $\|\bz\|^2 = \|A^\perp \bz\|^2+\|A\bz\|^2$. In order to promote displacements that happen \textit{within} the span of $A$, we must set a regularizer that penalizes the presence of $\bz$ within its \emph{orthogonal complement}, namely 
\begin{equation}\label{eq:tauorth}\tau_{A^\perp}(\bz)\defeq \tfrac{1}{2}\|A^\perp \bz\|^2_2.\end{equation}
Since $\tau_{A^\perp}$ is quadratic, its proximal operator can be obtained by solving a linear system~\cite[\S6.1.1]{parikh2014proximal}; developing and using the matrix inversion lemma results in two equivalent quantities 
\begin{equation}\label{eq:prox_orth}
\prox_{\gamma\tau_{{A^\perp}}}(\bz) = \left(I_d + \gamma (A^\perp)^T A^\perp\right)^{-1}\bz = 
\tfrac{1}{1+\gamma}(I_d+\gamma A^T(AA^T)^{-1}A)\,\bz .
\end{equation}

To summarize, given an orthogonal sub-basis $A$ of $p$ vectors (each of size $d$), promoting that a vector $\bz$ lies in its orthogonal can be achieved by regularizing its norm in the space orthogonal to the span of $A$. That norm has a proximal operator that can be computed by parameterizing $A$ \textit{explicitly}, either as a full-rank $p\times d$ matrix, or more simply a $p\times d$ orthogonal matrix, to recover the suitable proximal operator for $\tau_{A^\perp}$ in \eqref{eq:prox_orth}. Because that operator is simpler when $A\in\mathcal{S}_{p,d}$ is in the Stiefel manifold,
\begin{equation}\label{eq:prox_orth_stiefel}
\prox_{\gamma\tau_{{A^\perp}}}(\bz) = \frac{1}{1+\gamma}\left(I_d + \gamma A^T A\right)\bz.\\
\end{equation}
We propose to restrict the study in this work to elastic costs of the form~\ref{eq:prox_orth} where $A\in\mathcal{S}_{p,d}$. We also present in \Cref{sec:appendix_subspace_elastic_costs} alternative parameterizations left aside for future work.

\subsection{Statistical Aspects of Subspace Monge Maps}\label{sec: statistical_complexity}
The family of costs \eqref{eq:tauorth} is designed to promote transport maps whose displacements mostly lie in a low-dimensional subspace of $\R^d$.
In this section, we consider the statistical complexity of estimating such maps from data, assuming $A$ is known.
The question of estimating transport maps was first studied in a statistical context by \citet{hutter2021minimax}, and subsequent research has proposed alternative estimation procedures, with different statistical and computational properties~\citep{deb2021rates, manole2021plugin,muzellec2021near,pooladian2021entropic}.
We extend this line of work by considering the analogous problem for Monge maps with structured displacements.

We show that with a proper choice of $\varepsilon$, the MBO estimator outlined in Definition~\ref{def:MBO} is a consistent estimator of $T^\star$ as $n \to \infty$, and prove a rate of convergence in $L^2(\mu)$.
We also give preliminary theoretical evidence that, as $\gamma \to \infty$, maps corresponding to the subspace structured cost $\frac 12 \ell_2^2 + \gamma \tau_{A^\perp}$ can be estimated at a rate that depends only on the subspace dimension $p$, rather than on the ambient dimension $d$, thereby avoiding the \emph{curse of dimensionality}.

\paragraph{Sample Complexity Estimates for the MBO Estimator.}
The MBO estimator is a generalization of the entropic map estimator, originally defined by \citet{pooladian2021entropic} for the quadratic cost $h = \frac 12 \ell_2^2$. This estimator has been statistically analyzed in several regimes, see e.g., \citep{pooladian2023minimax, rigollet2022sample,del2022improved} and \citep{goldfeld2022limit}. We show that this procedure also succeeds for subspace structured costs of the form $h = \frac 12 \ell_2^2 + \gamma \tau_{A^\perp}$. As a result of being recast as an estimation task for quadratic cost, the following sample-complexity result for the MBO estimator follows from \cite[Theorem 3]{pooladian2021entropic}, and a computation relating the MBO estimator to a barycentric projection for the costs we consider (see \Cref{app: statistical_proofs} for the full statements and proofs).
\begin{thm}\label{thm: stat_complexity_main}
Let $A \in \R^{p \times d}$ be fixed, and consider $\tilde{T}$ given by an $M$-smooth and $m$-strongly convex function, whose inverse has at least three derivatves, and suppose $\nu$ has an upper- and lower- bounded density, and $\mu$ is upper-bounded, both supported over $\Omega \subseteq \R^d$ compact. Consider $T^\star$ of the form \cref{eq: pre_cond_map} for some $\gamma \geq 0$ fixed, and suppose we have samples $X_1,\ldots,X_n \sim \mu$ and $Y_1,\ldots, Y_n \sim (T^\star)_\sharp \mu$. Let $\hat{T}_\eps$ be the MBO estimator with $\eps \asymp n^{-\frac{1}{d + 4}}$. Then it holds that
\begin{align*}
    \mathbb{E}\|\hat{T}_\eps - T^\star\|^2_{L^2(\mu)} \lesssim n^{-\frac{2}{d + 4}}\,,
\end{align*}
where the underlying constants depend on properties of $\mu,\nu,\tilde{T}, \gamma$ and $A$.
\end{thm}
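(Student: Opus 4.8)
The plan is to reduce the estimation problem for the subspace-structured elastic cost $h = \tfrac12\ell_2^2 + \gamma\tau_{A^\perp}$ to the already-understood estimation problem for the plain quadratic cost, and then invoke the known rate of \cite[Theorem 3]{pooladian2021entropic}. The key algebraic observation is that the proximal operator $\prox_{\gamma\tau_{A^\perp}}$ in \eqref{eq:prox_orth_stiefel} is a fixed positive-definite linear map $B_\gamma \defeq \tfrac{1}{1+\gamma}(I_d + \gamma A^T A)$, so by \eqref{eq:brenier2} the optimal map for $h$ has the form $T^\star(\bx) = \bx - B_\gamma \nabla f^\star(\bx)$. Equivalently, $B_\gamma^{-1}(T^\star(\bx) - \bx) = -\nabla f^\star(\bx)$; integrating, $\bx \mapsto B_\gamma^{-1/2}T^\star(B_\gamma^{1/2}\bx)$ (up to the right change of variables) is the gradient of a convex function, i.e. a Brenier map for the quadratic cost between the pushforwards of $\mu$ and $\nu$ under the linear map $B_\gamma^{-1/2}$. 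So the first step is to make this change of variables precise: define $\tilde\mu \defeq (B_\gamma^{-1/2})_\sharp\mu$, $\tilde\nu \defeq (B_\gamma^{-1/2})_\sharp\nu$, and check that the $\ell_2^2$-optimal map between them is exactly $S^\star \defeq B_\gamma^{-1/2}\circ T^\star \circ B_\gamma^{1/2}$, which inherits $M'$-smoothness and $m'$-strong convexity with constants controlled by $m,M$ and the extreme eigenvalues of $B_\gamma$ (which lie in $[\tfrac{1}{1+\gamma},1]$).

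Second, I would check that the MBO estimator $\hat T_\eps$ for cost $h$, defined in \eqref{eq:mbo}, corresponds under this change of variables to the entropic map estimator of \cite{pooladian2021entropic} for the quadratic cost run on the transformed samples $B_\gamma^{-1/2}X_i$, $B_\gamma^{-1/2}Y_i$, with the same $\eps$ (possibly rescaled by a constant factor depending on $B_\gamma$, since $h(\bx - \by) = \tfrac12\|B_\gamma^{-1/2}(\bx-\by)\|^2 \cdot(\text{up to cross terms})$ — more carefully, $h(\bz) = \tfrac12 \langle \bz, B_\gamma^{-1}\bz\rangle$ exactly, because $B_\gamma^{-1} = I + \gamma(A^\perp)^T A^\perp$ and $\tau_{A^\perp}(\bz) = \tfrac12\|A^\perp\bz\|^2$, so $h(\bz) = \tfrac12\|B_\gamma^{-1/2}\bz\|^2$ on the nose). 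This exact identity is the crux: it says the Sinkhorn problem \eqref{eq:finitedual} with kernel $\mathbf{K}_{ij} = \exp(-\tfrac12\|B_\gamma^{-1/2}(\bx_i - \by_j)\|^2/\eps)$ is literally the quadratic-cost Sinkhorn problem on transformed data, and the barycentric projection / entropic map transforms covariantly. I would verify that \eqref{eq:mbo} equals $B_\gamma^{1/2}\hat S_\eps(B_\gamma^{-1/2}\bx)$ where $\hat S_\eps$ is the \cite{pooladian2021entropic} estimator on the transformed data; the appearance of $\prox_{\gamma\tau}$ and $\nabla\tau$ in \eqref{eq:mbo} should telescope into exactly the linear factor $B_\gamma$ via \eqref{eq:prox_orth_stiefel}.

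Third, having established the identity $\hat T_\eps - T^\star = B_\gamma^{1/2}(\hat S_\eps - S^\star)\circ B_\gamma^{-1/2}$, I bound
\[
\mathbb{E}\|\hat T_\eps - T^\star\|^2_{L^2(\mu)} \le \|B_\gamma^{1/2}\|_{\mathrm{op}}^2\, \mathbb{E}\|\hat S_\eps - S^\star\|^2_{L^2(\tilde\mu)},
\]
using that the density/support hypotheses on $\mu,\nu$ transfer to $\tilde\mu,\tilde\nu$ with constants depending on $B_\gamma$ (hence on $\gamma, A$), and that $S^\star$ satisfies the smoothness and strong-convexity and higher-regularity assumptions required by \cite[Theorem 3]{pooladian2021entropic}. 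That theorem then gives $\mathbb{E}\|\hat S_\eps - S^\star\|^2_{L^2(\tilde\mu)} \lesssim n^{-2/(d+4)}$ for $\eps \asymp n^{-1/(d+4)}$, and the claimed bound follows, with constants absorbing $\|B_\gamma^{1/2}\|_{\mathrm{op}}$ and the transformed-measure constants.

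The main obstacle I anticipate is the bookkeeping in the second step: showing that the MBO formula \eqref{eq:mbo}, which is written in terms of $\prox_{\gamma\tau}$ and $\nabla\tau$ evaluated at many points $\bx - \by_j$, collapses exactly to the clean linear-conjugation relation with the \cite{pooladian2021entropic} estimator. One has to be careful that the probability weights $\bp_j(\bx)$ computed from the $h$-cost on the original data agree with those computed from the quadratic cost on transformed data (they do, since $h(\bx_i - \by_j) = \tfrac12\|B_\gamma^{-1/2}(\bx_i-\by_j)\|^2$ makes the two Gibbs kernels identical), and that the rescaling of $\eps$ — if any — does not disturb the rate exponent. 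A secondary point is confirming that the hypotheses of \cite[Theorem 3]{pooladian2021entropic} (bounded densities, compact support, the precise regularity of $(S^\star)^{-1}$) are genuinely preserved under the linear change of variables; this is routine but must be stated. Everything else — the eigenvalue bounds on $B_\gamma$, the operator-norm inequality, the transfer of constants — is elementary.
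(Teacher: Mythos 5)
Your proposal is correct and follows essentially the same route as the paper: since $h(\bz)=\tfrac12\|W\bz\|^2$ with $W=B_\gamma^{-1/2}=((1+\gamma)I-\gamma A^TA)^{1/2}$, the paper likewise conjugates both $T^\star$ and the MBO estimator by $W$ (via a characterization of $h$-optimal maps and a lemma showing the MBO estimator for quadratic-form costs is the barycentric projection, which transforms covariantly), and then invokes \citet[Theorem 3]{pooladian2021entropic} on the transformed measures, paying constants in $\lambda_{\max}(W^{-1})$. The bookkeeping you flag is exactly what the paper's two auxiliary lemmas carry out, so no gap remains.
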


\subsection{Connection to the Spiked Transport Model}
The additional structure we impose on the displacements allows us to closely relate our model to the ``spiked transport model" as defined by \citet{niles2022estimation}.
The authors studied the estimation of the Wasserstein distance in the setting where the Brenier map between $\mu$ and $\nu$ takes the form, \looseness=-1
\begin{align}\label{eq: T_spiked}
  T_{\text{spiked}}(\bx) = \bx - A^T (A \bx - S(A \bx))\,,  
\end{align}
where $A \in \mathcal{S}_{p, d}$ and $S: \R^p \to \R^p$ is the gradient of a convex function on $\R^p$.
\citet{divol2022optimal} performed a statistical analysis of the map estimation problem under the spiked transport model. They constructed an estimator $\hat{T}_n$ such that the $L^2(\mu)$ risk decays with respect to the \emph{intrinsic dimension} $p \ll d$; this is summarized in the following theorem.
\begin{thm}[{\citealp[Section 4.6]{divol2022optimal}}]\label{thm: spiked_map_rates}
    Suppose $\Tilde{T}$ is bi-Lipschitz (i.e., is the gradient of a smooth, and strongly convex function) and $\mu$ has compact support, with density bounded above and below. Suppose further that there exists a matrix $A \in \R^{p \times d}$ on the Stiefel manifold such that $\nu \defeq (T_{\text{spiked}})_\sharp \mu$, with $T_{\text{spiked}}$ defined as in \cref{eq: T_spiked}.
    Assume that $\mu$ is known explicitly.
    Given $n$ i.i.d.\ samples from $\nu$, there exists an estimator $\hat T_n$ satisfying
    \begin{align}
        \mathbb{E}\| \hat{T}_n - T_{\text{spiked}}\|^2_{L^2(\mu)} \lesssim_{\log(n)} n^{-\Theta(\frac 1p )}\,.
    \end{align}
\end{thm}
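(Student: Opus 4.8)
The statement is quoted from \citep{divol2022optimal}; I sketch the mechanism. Write $R:=A^T A$ for the orthogonal projector onto $\mathrm{row}(A)$, so $I-R=A^\perp$. Since $AA^T=I_p$ and $AA^\perp=0$, the spiked map factors through the $p$-dimensional subspace: for all $\bx$,
\begin{equation*}
T_{\text{spiked}}(\bx)=(I-R)\bx+A^T S(A\bx),\qquad A\,T_{\text{spiked}}(\bx)=S(A\bx),\qquad A^\perp T_{\text{spiked}}(\bx)=A^\perp\bx .
\end{equation*}
Hence the displacement $T_{\text{spiked}}(\bx)-\bx=A^T(S(A\bx)-A\bx)$ lies entirely in $\mathrm{row}(A)$, the orthogonal component of a $\nu$-sample equals that of its $\mu$-preimage, and, pushing forward by $\bx\mapsto A\bx$, the measures $\mu_A:=(A\,\cdot)_\sharp\mu$ and $\nu_A:=(A\,\cdot)_\sharp\nu$ satisfy $\nu_A=S_\sharp\mu_A$. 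Because $S$ (which plays the role of $\tilde T$ in the hypothesis) is the gradient of a convex function on $\R^p$, it is exactly the Brenier (quadratic-cost) map from $\mu_A$ to $\nu_A$.

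Suppose first that $A$ is known. Then estimating $T_{\text{spiked}}$ reduces to estimating a Brenier map in dimension $p$: $\mu_A$ is known explicitly, and from $Y_1,\dots,Y_n\sim\nu$ one gets $AY_1,\dots,AY_n\sim\nu_A$. Feed these into a minimax-rate-optimal plug-in Brenier-map estimator in $\R^p$ (a wavelet-/nearest-neighbour-type estimator in the spirit of \citep{hutter2021minimax,manole2021plugin}, which is what \citep{divol2022optimal} builds on). Since $\mu_A$ inherits from $\mu$ a compact support and a two-sided bounded density, and $S$ is bi-Lipschitz, one gets $\mathbb{E}\|\hat S_n-S\|^2_{L^2(\mu_A)}\lesssim_{\log(n)}n^{-\Theta(1/p)}$. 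Now set $\hat T_n(\bx):=(I-A^T A)\bx+A^T\hat S_n(A\bx)$. As $A^T$ is an isometry on $\R^p$ ($\|A^T\bu\|^2=\bu^T AA^T\bu=\|\bu\|^2$) and the orthogonal part of $\hat T_n$ reproduces that of $T_{\text{spiked}}$ exactly,
\begin{equation*}
\|\hat T_n-T_{\text{spiked}}\|^2_{L^2(\mu)}=\int\|\hat S_n(A\bx)-S(A\bx)\|^2\,\mu(\mathrm{d}\bx)=\|\hat S_n-S\|^2_{L^2(\mu_A)},
\end{equation*}
so the dimension-$p$ rate transfers verbatim.

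The real difficulty is that $A$ is \emph{not} observed and must be estimated. The subspace is pinned down by a projection-pursuit identity: since nothing moves orthogonally to $\mathrm{row}(A)$, one has $W_2^2(\mu_A,\nu_A)=W_2^2(\mu,\nu)$, whereas $W_2^2(B_\sharp\mu,B_\sharp\nu)\le W_2^2(\mu,\nu)$ for every $B\in\mathcal{S}_{p,d}$, with strict deficit away from $\mathrm{row}(A)$ under a nondegeneracy assumption on $S$; thus $\mathrm{row}(A)$ is the essentially unique maximizer of $B\mapsto W_2^2(B_\sharp\mu,B_\sharp\nu)$. One estimates $\hat A$ by approximately maximizing an empirical surrogate of this projected distance and controls the subspace error via the spiked-transport analysis of \citep{niles2022estimation,divol2022optimal}---estimating a $p$-dimensional subspace at fixed signal strength behaves like a low-dimensional problem, so this error does not dominate the map-estimation error---then re-runs the previous step with $\hat A$ in place of $A$, using a stability analysis to propagate $\|\hat A-A\|$ (up to rotation) through the nonlinear map $\bx\mapsto(I-\hat A^T\hat A)\bx+\hat A^T\hat S_n(\hat A\bx)$; sample splitting keeps the two estimates independent. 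I expect the crux to be (i) turning the variational characterization of $\mathrm{row}(A)$ into a quantitative bound---a near-maximizer of the empirical projected distance must be provably close to $\mathrm{row}(A)$---and (ii) pushing that perturbation through the Brenier-map estimator without reintroducing a dependence on the ambient dimension $d$.
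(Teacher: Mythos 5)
The paper itself gives no proof of this statement: Theorem~\ref{thm: spiked_map_rates} is quoted directly from \citep[Section 4.6]{divol2022optimal}, so there is no in-paper argument to compare against line by line. Judged on its own, your known-$A$ half is correct and clean: the factorization $T_{\text{spiked}}(\bx)=(I-A^TA)\bx+A^TS(A\bx)$, the identity $\nu_A=S_\sharp\mu_A$, Brenier optimality of $S$ between the projected measures, the isometry $\|A^T\bu\|=\|\bu\|$, and the exact transfer $\|\hat T_n-T_{\text{spiked}}\|^2_{L^2(\mu)}=\|\hat S_n-S\|^2_{L^2(\mu_A)}$ are all right, and this is indeed the structural picture behind the cited result.

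Where you diverge from the actual argument of \citep{divol2022optimal} is the unknown-$A$ step, and that is also where your sketch has a genuine gap. Their analysis is not a two-stage ``estimate the subspace by projection pursuit, then estimate a $p$-dimensional Brenier map'' scheme: it is a one-shot empirical-risk (semi-dual) minimization over the composite class of spiked potentials/maps indexed by $(B,\psi)$ with $B\in\mathcal{S}_{p,d}$ and $\psi$ smooth convex on $\R^p$, with the rate controlled by metric entropy; the Stiefel factor is a finite-dimensional manifold whose entropy contributes only parametric/logarithmic terms, so the $p$-dimensional potential class dominates and yields $n^{-\Theta(1/p)}$ up to logs, \emph{without ever needing to recover $A$ consistently}. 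Your route, by contrast, hinges on the claim that $\mathrm{row}(A)$ is the essentially unique maximizer of $B\mapsto W_2^2(B_\sharp\mu,B_\sharp\nu)$, which fails without a nondegeneracy assumption on $S-\Id$ that the theorem does not make (e.g., if $S$ is close to the identity, or the displacements span a strict subspace of $\mathrm{row}(A)$, the maximizer is far from unique and $\hat A$ can drift), and you give no quantitative stability bound turning ``near-maximizer of the empirical projected distance'' into ``small map error'' after composing with $\hat S_n(\hat A\,\cdot)$ — a step that mixes subspace error with the moduli of continuity of $S$ and of the low-dimensional estimator. You flag this as the crux yourself; it is precisely the part that the ERM-over-the-composite-class argument of \citep{divol2022optimal} is designed to sidestep, since excess risk over the class bounds the map error directly even when $A$ is not identifiable.
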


We now argue that the spiked transport model can be recovered in the large $\gamma$ limit of subspace structured costs.
Indeed, if $\gamma \to \infty$, then displacements in the subspace orthogonal to $A$ are heavily disfavored, so that the optimal coupling will concentrate on the subspace given by $A$, thereby recovering a map of the form \eqref{eq: T_spiked}, which by Theorem~\ref{thm: spiked_map_rates} can be estimated at a rate independent of the ambient dimension.
Making this observation quantitative by characterizing the rate of estimation of $T^\star$ as a function of $\gamma$ for $\gamma$ large is an interesting question for future work.

\section{A Bilevel Loss to Learn Elastic Costs}\label{sec:learning_structure}
Following \S~\ref{sec:subspace}, we propose a general loss to \textit{learn} the parameter $\theta$ of a family of regularizers $\{\tau_\theta\}_\theta$ given source and target samples only. Our goal is to infer adaptively a $\theta$ that promotes regular displacements, apply it within the estimation of \citeauthor{Monge1781} maps using MBO, and leverage this knowledge to improve prediction quality.
Given input and target measures characterized by point clouds $\mathbf{X},\mathbf{Y}$ and probability weights $\ba,\bb$, our loss follows a simple intuition: the ideal parameter $\theta$ should be such that the bulk of the OT cost bore by the optimal \citeauthor{Monge1781} map, for that cost, is dominated by displacements that have a \textit{low} regularization value. Since the only moving piece in our pipeline will be $\theta$, we consider all other parameters \textit{constant} in the computation of the primal solution, to re-write \eqref{eq:finiteprimal} as:
\begin{equation}
P^\star(\theta) \defeq P^\star\left(\mathbf{X},\mathbf{a},\mathbf{Y},\mathbf{b}; \tfrac{1}{2}\ell^2_2 + \gamma \tau_{\theta}, \varepsilon\right) \in\mathbb{R}^{n\times m}.   
\end{equation}
Each entry $[P^\star(\theta)]_{ij}$ quantifies the optimal association strength between a pair $(\bx_i,\by_j)$ when the cost is parameterized by $\theta$, where a given pair can be encoded as a displacement $\bz_{ij}\defeq\by_j - \bx_i$. For the regularizer $\theta$ to shape displacements, we expect $P^\star(\theta)$ to have a large entry on displacements $\bz_{ij}$ that exhibit a low regularizer $\tau_\theta(\bz_{ij})$ value. In other words, we expect that $\tau_\theta(\bz_{ij})$ to be as small as possible when $P_{ij}^\star(\theta)$ is high. We can therefore consider the loss
\begin{defn}[Elastic Costs Loss]\label{defn:loss} Given two weighted point clouds $\ba,\bX,\bb,\bY$, and $P^\star(\theta)$ defined implicitly, as an OT solution in Equation~\eqref{eq:finiteprimal},  let
\begin{equation}\label{eq:cost_function}
\mathcal{L}(\theta)\defeq \left\langle P^\star(\theta), R(\theta)\right\rangle, \text{ with } [R(\theta)]_{ij} = \tau_{\theta}(\bz_{ij}).
\end{equation}
\end{defn}
Because $P^\star(\theta)$ is itself obtained as the solution to an optimization problem, minimizing $\mathcal{L}$ is, therefore, a \emph{bilevel} problem.
To solve it, we must compute the gradient $\nabla \mathcal{L}(\theta)$, given by the vector-Jacobian operators $\partial P^\star(\cdot)^*[\cdot]$ and $\partial R(\cdot)^*[\cdot]$ of $P^\star$ and $R$ respectively,
borrowing notations from~\citep[\S2.3]{blondel2024elements} (see also \S~\ref{sec:explainloss} for a walk-through of this identity)
\begin{equation}
\label{eq:chain_rule2}
    \nabla \mathcal{L}(\theta) = \partial P^\star(\theta)^*[R(\theta)] + \partial\!R(\theta)^*[P^\star(\theta)]
\end{equation}
The first operator $\partial P^\star(\cdot)^*[\cdot]$ requires differentiating the solution to an optimization problem $P^\star(\theta)$. This can be done~\citep[\S10.3.3]{blondel2024elements} using either unrolling of \citeauthor{Sinkhorn64} iterations or using implicit differentiation. We rely on \textsc{OTT-JAX}~\citep{cuturi2022optimal} to provide that operator, using unrolling. The second operator $\partial R(\cdot)^*[\cdot]$  can be trivially evaluated, since it only involves differentiating the regularizer function $\tau_\theta(\cdot)$, which can be done using automatic differentiation.

\textbf{Learning Subspace Costs. } We focus in this section on the challenges arising when optimizing subspace costs, as detailed in Section~\ref{sec:learning_structure}.
Learning matrix $A$ in this context is equivalent to learning a subspace in which the displacement between the source and target measures happen mostly in the range of $A$.
As discussed previously, the cost function $\mathcal{L}(A)$ should be optimized over the Stiefel manifold~\citep{edelman1998geometry}.
We use Riemannian gradient descent~\citep{boumal2023introduction} for this task, which iterates, for a step-size $\eta > 0$ 
$$A \xleftarrow{}\mathcal{P}(A - \eta \tilde\nabla\mathcal{L}(A))\,, $$ with the \emph{Riemannian gradient} of $\mathcal{L}$ given by $\tilde\nabla\mathcal{L}(A) \defeq G -  A G^T A$ where: $G \defeq \nabla \mathcal{L}(A)$ the standard Euclidean gradient of $A$ computed with automatic differentiation provided in~\eqref{eq:chain_rule2}; $\mathcal{P}$ is the projection on the Stiefel manifold, with formula $\mathcal{P}(A) = (AA^{\top})^{-1/2}A$. These updates ensure that one stays on the manifold~\citep{absil2012projection}.

\section{Experiments}
Thanks to our ability to compute ground-truth $h$-optimal maps presented in \S~\ref{sec: ground_truth_displacement}, we generate benchmark tasks to measure the performance of \citeauthor{Monge1781} map estimators. We propose in \S~\ref{subsec:groundtruth} to test the MBO estimator~\citep{cuturi2023monge} when the ground-truth cost $h$ that has generated those benchmarks is known. In \S\ref{subsec:learningsub}, we consider the more difficult task of learning simultaneously, and as outlined in \S~\ref{sec:learning_structure}, an OT map and the ground truth parameter of a subspace-elastic cost defined by a matrix $A^*$ of size $p^*\times d$. The cost is parameterized by a matrix $\hat A$ of size $\hat p\times d$, where $\hat p$ is equal to or larger than $p^*$. We check with this synthetic task the soundness of the loss $\mathcal{L}(\theta)$, Definition~\ref{defn:loss}, and of our Riemannian descent approach by evaluating to what extent the $\hat p$ vectors in $\hat{A}$ recovers the subspace spanned by $A^*$. Finally, we consider in \S~\ref{subsec:ss} a direct application of subspace elastic costs to real data, without any ground truth knowledge, using perturbations of single-cell data. In this experiment, our pipeline learns both an OT map and a subspace. Our code implements a parameterized \texttt{RegTICost} class, added to \textsc{OTT-JAX} \citep{cuturi2022optimal}. Such costs can be fed into the \texttt{Sinkhorn} solver, and their output cast as \texttt{DualPotentials} objects that can output the $T_\varepsilon$ map given in Definition~\ref{def:MBO}.

\subsection{MBO on Synthetic Ground Truth Displacement}\label{subsec:groundtruth}
\begin{figure}[t]
   \centering
\includegraphics[width=\textwidth]{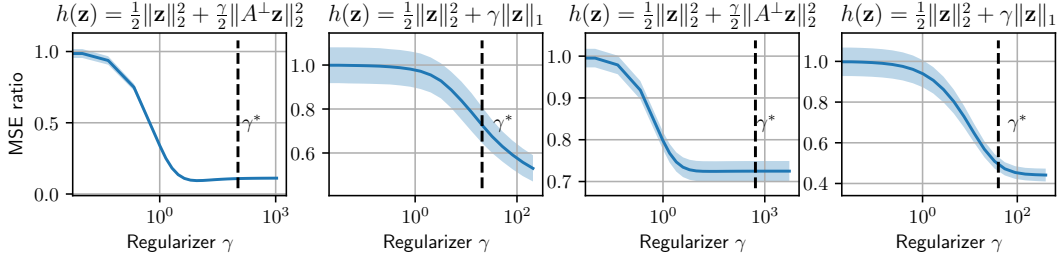}
\caption{Performance of the MBO estimator on two ground-truth tasks involving the $\tau=\ell_1$ and $\tau_{A^\perp}=\|A^\perp\bz\|^2_2$ structured costs, where $p=2$ in dimension $d=5$ (two figures to the \textit{left}) and dimension $d=10$ (two figures to the \textit{right}). We display the MSE ratio between the MSE estimated with a regularizer strength $\gamma > 0$ and that in the absence of regularization (i.e., $\gamma=0$). The level of regularization used for generating the ground truth data is $\gamma^*$, whereas performance are shown varying w.r.t. $\gamma$. 
We display curves $\pm$ s.t.d. estimated over 10 random seeds.}\label{fig:mse}
\end{figure}
In this section, we assume that the regularizer $\tau$ is \textit{known}, using the same $\tau$ both for generation and estimation, but that the ground truth regularization strength $\gamma^*$ used to generate data is not known. We use that cost to evaluate the transport associated with $\bar g_\varepsilon^h$ on a sample of points, using Proposition\ref{prop:pushforward}, and then compare the performance of Sinkhorn based estimators, either with that cost or the standard $\tfrac12\ell^2_2$ cost (which corresponds to $\gamma=0$). 

We consider the $\tau=\ell_1$ and $\tau_{A^\perp}=\|A^\perp\bz\|^2_2$ regularizer, and its associated proximal soft-thresholding operator. While we assume knowledge of $\tau$ in the MBO estimator, we do not use the ground truth regularization $\gamma^*$ which generated the data, and, instead, vary it. The data is generated following \S~\ref{sec: ground_truth_displacement} by sampling a concave quadratic function $g(\bz)\defeq-\frac{1}{2}(\bz-\bw)^T M (\bz-\bw)$ where $M$ is a Wishart matrix, sampled as $M=QQ^T$, where $Q\in\R^{d\times 2d}$ is multivariate Gaussian, and $\bw$ is a random Gaussian vector.
We then sample $n=1024$ Gaussian points stored in $\bX_{T}$ and transport each using the map defined in Proposition~\ref{prop:pushforward}, computed in practice with Proposition~\ref{prop: prox_descent}. This recovers matched train data $\bX_{T}$ and $\bY_{T}$. We do the same for a test fold $\bX_{t},\bY_{t}$ of the same size, to report our metric, the mean squared error (MSE), defined as $\|T_{\varepsilon}(\bX_t) - \bY_t\|^2_2$, where $T_\varepsilon$ is obtained from Definition~\ref{def:MBO} using $\bX_{T},\bY_{T}$. We plot this MSE as a function of $\gamma$, where $\gamma=0$ corresponds exactly to the MBO using the naked $\ell_2^2$ cost.
We observe in Figure~\ref{fig:mse} that the MBO estimator with positive $\gamma$ outperforms significantly that using $\ell_2^2$ only, for any range of the parameter $\gamma$.

\begin{figure}
\centering
    \centering
    \includegraphics[width=.98\textwidth]{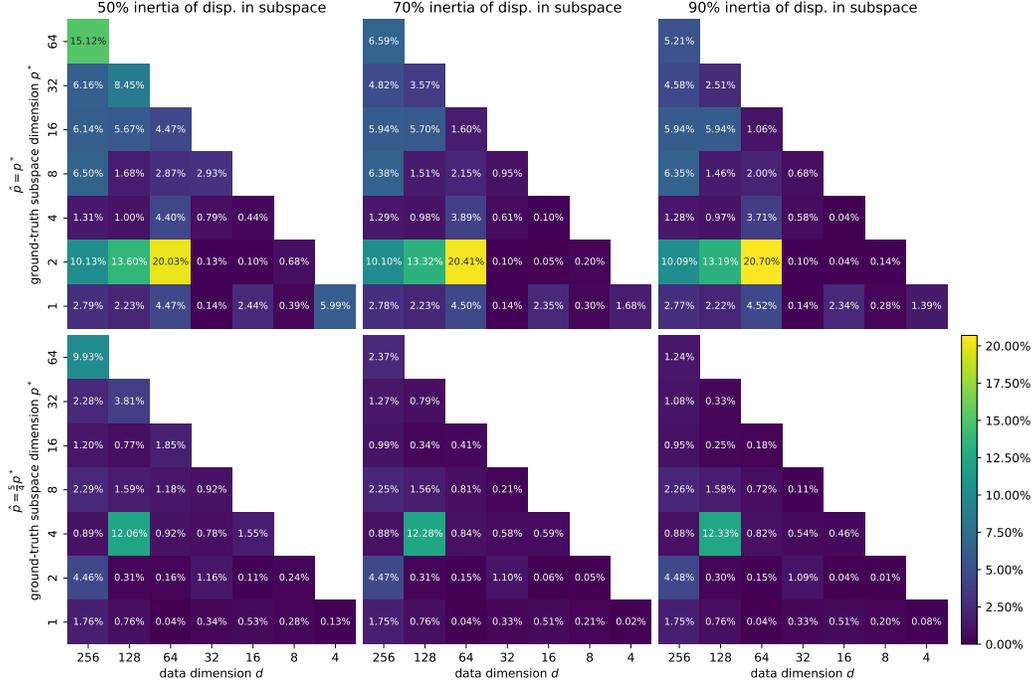}
    \label{exp:recovery}
\caption{Error averaged over 5 seeded runs (lower is better) in $[0,1]$ of the $\hat p\times d$ orthogonal matrix $\hat A$ recovered by our algorithm, compared to the ground truth $p^*\times d$ cost matrix $A^*$. Error bars are not shown for compactness, but are negligible since all quantities are bounded below and close to $0$. Dimensions $d, p^*$ vary in each of these 6 plots, whereas $\hat p$ is fixed to either $p^*$ (top row) or $1.25 p^*$ (bottom row).
Error is quantified as the normalized squared-residual error obtained when projecting the $p^*$ basis vectors of $A^*$ onto the span of $\hat{A}$. From left to right, the regularization strength $\gamma^*$ increases to ensure that 50\%, 70\% and 90\% of the total inertia of all displacements generated by the ground-truth \citeauthor{Monge1781} map are borne by the $p^*$ highest singular values. As expected, recovery is easier when $\hat p$ is slightly larger than $p^*$ (bottom) compared to being exactly equal (top). It is also easier as the share of inertia captured by $p^*$ increases.}
\label{exp:figure-recovery}
\end{figure}

\subsection{Recovery of Ground-Truth Subspace Parameters in Elastic Costs }\label{subsec:learningsub}
We propose to test the ability of our pipeline to recover the ground truth $A^*$ parameter of a regularizer $\tau_{A^\perp}$  as defined in \eqref{eq:tauorth}. To do so, we proceed as follows: For dimension $d$, we build the ground truth cost $h$ by selecting $A^*$, sampling a $p^*\times d$ normal matrix that is then re-projected on the $p^*\times d$ Stiefel manifold.
Next, we sample a random ICNN, and set the base function $g$ to be its negative. We then sample a point cloud $\bX$ of $n=512$ standard Gaussian points, and apply, following Proposition \ref{prop:pushforward}, the corresponding ground-truth transport to obtain $\bY$ of the same size. We tune the regularization parameter $\gamma$ for $\tau$, to ensure that the $p^*$ first singular values of displacements $\bY-\bX$ captured either 50\%, 70\% or 90\% of the total inertia. We expect that the larger this percentage, the easier recovery should be. See \S~\ref{app:exps} for details.

We then launch our solver with a dimension that is either $\hat{p}=p^*$ or $\hat{p}=\tfrac54 p^*$. We measure recovery of $A^*$ by $\hat{A}$ through the average (normalized by the basis size) of the residual error, when projecting the vectors in $A^*$ in the span of the basis $\hat{A}$, namely $\|A^*-\hat{A}\hat{A}^TA^*\|_2^2/p^*$. For simplicity, we report performance after 1000 iterations of Riemannian gradient descent, with a stepsize $\eta$ of $0.1/\sqrt{i + 1}$ at iteration $i$. All results in Figure~\ref{exp:figure-recovery} agree with intuition in the way performance varies with $d, p^*, \hat{p}$. More importantly, with errors that are often below one percent, we can be confident that our algorithm is sound. We observe that most underperforming experiments could be improved using early stopping.

\subsection{Learning Displacement Subspaces for Single-Cell Transport}\label{subsec:ss}
We borrow the experimental setting in \citep{cuturi2023monge}, using single-cell RNA sequencing data borrowed from \citep{srivatsan2020massively}. The original dataset shows the responses of cancer cell lines to 188 drug perturbations, downsampled to the $5$ drugs (Belinostat, Dacinostat, Givinostat, Hesperadin, and Quisinostat) that have the largest effect. After various standard pre-processings (dropping low-variability genes, and using a $\log(1+\cdot)$ scaling), we project the dataset to $d=256$ directions using PCA. We then use 80\% train/ 20\%test folds to benchmark two MBO estimators: that computed using the $\ell_2^2$ cost, and ours, using an elastic subspace cost, following the learning pipeline outlined in \S~\ref{sec:learning_structure}. We plot the Sinkhorn divergence (cf. \citet{feydy2019interpolating}) for the $\ell_2^2$ cost (see the documentation in \textsc{OTT-JAX} \citep{cuturi2022optimal}).

\begin{figure}[H]
   \centering
\includegraphics[width=\textwidth]{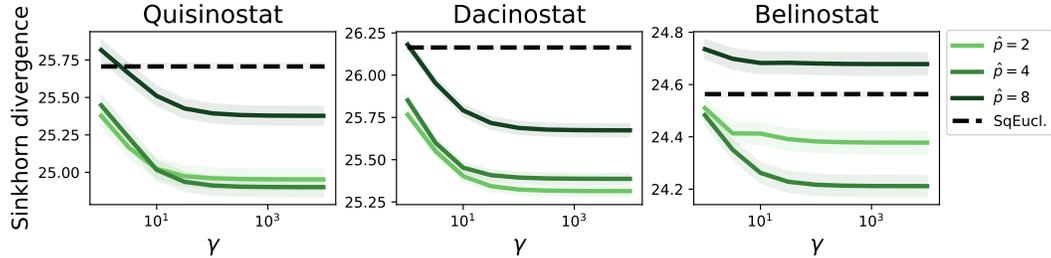}
\caption{Predictive performance of the MBO estimator on single-cell datasets, $d=256$, using either the naive baseline $\ell_2^2$ cost (black dotted line) or elastic subspace cost \eqref{eq:tauorth}, with varying $\gamma$ and $\hat{p}$. 
Remarkably, promoting displacements to happen in a subspace of much lower dimension improves predictions, even when measured in the squared-Euclidean distance.
}\label{fig:mse}
\end{figure}
\vspace{-5mm}
\textbf{Conclusion.}
In this work, we proposed an algorithmic mechanism to design ground-truth transports for structured costs. As a first application, we were able to successfully benchmark the MBO estimator of \citep{cuturi2023monge} on two tasks (involving the $\ell_1$ and an orthogonal projection norm), showcasing the versatility of the MBO framework. Next, we demonstrated our ability to leverage subspace-penalizing costs to \emph{learn displacement subspaces} by solving an \emph{inverse OT problem}. 
We showed successful numerical performance of the MBO estimator when the subspace is known but the regularization strength is not, but also that we were able to learn the ground-truth subspace. We foresee several open directions, the most encouraging being cementing connections between subspace regularized transport, and the spiked transport model.

\bibliography{biblio}
\bibliographystyle{plainnat}

\appendix

\section{More on Subspace Elastic Costs}
\label{sec:appendix_subspace_elastic_costs}
Recall that for a rank-$p$ matrix $A\in\R^{p\times d}$, $p\leq d$, the projection matrix that maps it to its orthogonal is $A^\perp = I-A^T(AA^T)^{-1}A$. When $A$ lies in the Stiefel manifold (i.e. $AA^T =I$), we have the simplification $A^\perp = I-A^T A$. This results in the Pythagorean identity $\|\bz\|^2 = \|A^\perp \bz\|^2+\|A\bz\|^2$, as intended. In order to promote displacements that happen \textit{within} the span of $A$, we must set a regularizer that penalizes the presence of $\bz$ within its \emph{complement}:
$$\tau_{A^\perp}(\bz):= \tfrac{1}{2}\|A^\perp \bz\|^2_2 = \tfrac12 \bz^T(A^\perp)^T A^\perp\bz=\tfrac{1}{2}\bz^T(I_d - A^T(AA^T)^{-1}A)\bz.$$
Since $\tau_{A^\perp}$ is a quadratic form, its proximal operator can be obtained by solving a linear system~\cite[\S6.1.1]{parikh2014proximal}; developing and using the matrix inversion lemma results in 
\begin{equation}
\prox_{\gamma\tau_{{A^\perp}}}(\bz) = \left(I_d + \gamma (A^\perp)^T A^\perp\right)^{-1}\bz = 
\tfrac{1}{1+\gamma}(I+\gamma A^T(AA^T)^{-1}A)\,\bz .
\end{equation}

To summarize, given an orthogonal sub-basis $A$ of $p$ vectors (each of size $d$), promoting that a vector $\bz$ lies in its orthogonal can be achieved by regularizing its norm in the orthogonal of $A$. That norm has a proximal operator that can be computed either by
\begin{enumerate}[leftmargin=.3cm,itemsep=.0cm,topsep=0cm,parsep=2pt]
\item Parameterizing $A$ \textit{implicitly}, through an \textit{explicit} parameterization of an orthonormal basis $B$ for $A^\perp$, as a matrix directly specified in the $(d-p)\times p$ Stiefel manifold. This can alleviate computations to obtain a closed form for its proximal operator:
$$\prox_{\gamma\tau_{A^\perp}}(\bz) = \prox_{\gamma\tau_{B}}(\bz)=\bz - B^T\left(B\bz - \frac{1}{1+\gamma} B\bz\right) = \left(I_d - \frac{\gamma}{1+\gamma}B^T B\right)\bz,$$
but requires storing $B$, a $(d-p)\times d$ orthogonal matrix, which is cumbersome when $p\ll d$.
\item Parameterizing $A$ \textit{explicitly}, either as a full-rank $p\times d$ matrix, or more simply a $p\times d$ orthogonal matrix, to recover the suitable proximal operator for $\tau_{A^\perp}$, by either 
\begin{enumerate}[leftmargin=.3cm,itemsep=.0cm,topsep=0cm,parsep=2pt]
\item Falling back on the right-most expression in~\eqref{eq:prox_orth} in the linear solve, which can be handled using sparse conjugate gradient solvers, since the application of the right-most linear operator has complexity $(p+1)\times d$ and is positive definite, in addition to the linear solve of complexity $O(p^3)$. This simplifies when $A$ is orthogonal, $A\in\mathcal{S}_{p,d}$ since in that case,
\begin{equation}
\prox_{\gamma\tau_{{A^\perp}}}(\bz) = \frac{1}{1+\gamma}\left(I_d + \gamma A^T A\right)\bz.
\end{equation}

\item Alternatively, compute a matrix in the $(d-p)\times p$ Stiefel manifold that spans the same linear space as, through the Gram-Schmidt process~\citep[p.254]{golub2013matrix} of the $d\times d$ matrix $A^\perp$ or rank $d-p$, $B:=\textrm{Gram-Schmidt}(A^\perp)$, to fall back on the expression above.
\end{enumerate}
\end{enumerate}

\section{Proofs from \Cref{sec: statistical_complexity}}\label{app: statistical_proofs}

To perform this analysis, we rely on the following characterization of optimal maps for subspace structured costs, which reveals a close connection with optimal maps for the standard $\ell_2^2$ cost. 

\begin{prop}\label{prop: opt_characterization}
    Let $T^\star$ be the optimal map between $\mu$ and $\nu$ for the cost $h = \tfrac 12 \ell_2^2 + \gamma \tau_{A^\perp}.$
    Denote by $W$ the linear map $\bx \mapsto ((1+\gamma) I - \gamma A^T A)^{1/2} \bx$.
    Then $W \circ T^\star \circ W^{-1}$ is the Brenier map (i.e., $\ell_2^2$ optimal map) between $W_\sharp \mu$ and $W_\sharp \nu$.
    Equivalently, $T^\star$ is $h$-optimal if and only if it can be written 
    \begin{equation}\label{eq: pre_cond_map}
        T^\star = W^{-1} \circ \tilde T \circ W,
    \end{equation}
    where $\tilde T$ is the gradient of a convex function.
\end{prop}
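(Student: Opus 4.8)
The plan is to exploit the fact that, when $A$ lies on the Stiefel manifold, the elastic cost $h$ is nothing but a \emph{linearly reparametrized} squared-Euclidean cost. First I would record the algebraic identity: since $A^\perp = I - A^TA$ is an orthogonal projection, it is symmetric and idempotent, so $(A^\perp)^TA^\perp = A^\perp$ and
\[
h(\bz) = \tfrac12\|\bz\|_2^2 + \tfrac\gamma2\,\bz^T A^\perp \bz = \tfrac12\,\bz^T\big((1+\gamma)I - \gamma A^TA\big)\bz = \tfrac12\|W\bz\|_2^2,
\]
where $W = \big((1+\gamma)I - \gamma A^TA\big)^{1/2}$. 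Because $A^TA$ is an orthogonal projection (its eigenvalues are $0$ and $1$, as $AA^T = I_p$), the matrix $(1+\gamma)I - \gamma A^TA$ has eigenvalues $1$ and $1+\gamma$, hence is symmetric positive definite and $W$ is a well-defined, symmetric, invertible linear isomorphism. In particular $h(\bx - \by) = \tfrac12\|W\bx - W\by\|_2^2 = \ell_2^2(W\bx, W\by)$, i.e.\ $h$ is the pullback of the quadratic cost through $W$.

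Next I would set up the change-of-variables bijection at the level of admissible maps. For any measurable $T$, write $S \defeq W \circ T \circ W^{-1}$. Since $W$ is a bijection and pushforwards compose, $T_\sharp \mu = \nu$ if and only if $S_\sharp(W_\sharp \mu) = W_\sharp \nu$. Moreover the transport costs coincide: substituting $\bu = W\bx$,
\[
\int h(\bx - T(\bx))\,\mu(\mathrm{d}\bx) = \int \tfrac12\|W\bx - WT(\bx)\|_2^2\,\mu(\mathrm{d}\bx) = \int \tfrac12\|\bu - S(\bu)\|_2^2\,(W_\sharp\mu)(\mathrm{d}\bu).
\]
Hence $T \mapsto W \circ T \circ W^{-1}$ is a cost-preserving bijection between admissible maps for the $h$-Monge problem $(\mu,\nu)$ and admissible maps for the $\ell_2^2$-Monge problem $(W_\sharp\mu, W_\sharp\nu)$; in particular it carries the minimizer to the minimizer. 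This already yields the first assertion: $T^\star$ is $h$-optimal between $\mu$ and $\nu$ if and only if $W \circ T^\star \circ W^{-1}$ is the Brenier (i.e.\ $\ell_2^2$-optimal) map between $W_\sharp\mu$ and $W_\sharp\nu$.

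For the ``equivalently'' reformulation I would invoke the Brenier characterization recalled in \S\ref{sec:back} after \eqref{eq:brenier} (the ``if'' direction being the $\ell_2^2$ instance of Proposition~\ref{prop:pushforward}): a map is $\ell_2^2$-optimal between a measure and its own pushforward precisely when it is the gradient of a convex function. Applying this to $S = W \circ T^\star \circ W^{-1}$ gives $S = \tilde T$ with $\tilde T = \nabla\phi$ for some convex $\phi$, which upon conjugating by $W$ is exactly $T^\star = W^{-1} \circ \tilde T \circ W$ as in \eqref{eq: pre_cond_map}. As a sanity check one can also verify this directly from \eqref{eq:brenier2}: here $\prox_{\gamma\tau_{A^\perp}} = W^{-2}$ and $\nabla h = W^2$, so $T^\star = \Id - W^{-2}\nabla f^\star$, and $W T^\star W^{-1} = \Id - \nabla(f^\star \circ W^{-1})$ (using $W = W^T$), which is a gradient of a convex function iff $\tfrac12\|\cdot\|_2^2 - f^\star\circ W^{-1}$ is convex, matching the $h$-concavity of $f^\star$.

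I do not anticipate a genuine obstacle; the only points demanding care are (i) checking that $W$ is well-defined and positive definite, which reduces to the projection identity above, and (ii) tracking the standard regularity hypotheses (e.g.\ $\mu$ absolutely continuous with finite second moment) under which Brenier's theorem and uniqueness of the optimal map apply. Since $W$ is a linear isomorphism, it preserves absolute continuity, compact support, and density bounds, so these hypotheses transfer between the two problems with no extra work.
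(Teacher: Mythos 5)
Your proposal is correct and follows essentially the same route as the paper: rewrite $h(\bz)=\tfrac12\|W\bz\|_2^2$ using that $A^\perp$ is an orthogonal projection, conjugate by the linear isomorphism $W$, and invoke Brenier's theorem for the quadratic cost on $(W_\sharp\mu, W_\sharp\nu)$. The only (minor) difference is that you perform the change of variables at the level of Monge maps while the paper does it at the level of Kantorovich couplings, which is the same idea in a slightly different formulation; your explicit verification that $W$ is positive definite and the sanity check via $\prox_{\gamma\tau_{A^\perp}}=W^{-2}$ are welcome additions.
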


\begin{proof}[Proof of \Cref{prop: opt_characterization}]
The cost $h = \frac 12 \ell_2^2 + \gamma \tau_{A^\perp}$ can be written as
\begin{align*}
    \tfrac{1}{2}\left[z^\top (I + \gamma (A^\perp)^\top A^\perp)z\right] = \tfrac12 \|Wz\|^2\,.
\end{align*}
The optimal transport problem we consider is therefore equivalent to minimizing
\begin{align}\label{eq: pre_cond_problem}
    \min_{\pi\in\Gamma(\mu,\nu)} \int \tfrac12\|Wx-Wy\|^2{\rm{d}}\pi(x,y) = \min_{\pi \in \Gamma(W_\sharp \mu,W_\sharp \nu)} \int \tfrac12\|x'-y'\|^2 {\rm{d}}\pi(x',y')\,.
\end{align}
Brenier's theorem implies that the solution to the latter problem is given by the gradient of a convex function, and that this property uniquely characterizes the optimal map.
Writing this function as $\tilde T$, we obtain that the optimal coupling between $W_\sharp \mu$ and $W_\sharp \nu$ is given by $y' = \tilde T(x')$, which implies that the optimal $h$-coupling between $\mu$ and $\nu$ is given by $T^\star = W^{-1} \circ \tilde T \circ W$, as desired.

\end{proof}

The proof of \Cref{thm: stat_complexity_main} requires the following two lemmas.
\begin{lem}\label{lem: MBO_barycenter}
    For costs of the form $h(z) = \tfrac12 z^\top B z $ where $B$ is positive definite, the MBO estimator between two measures $\mu$ and $\nu$ can be written as the barycentric projection of the corresponding optimal entropic coupling.
\end{lem}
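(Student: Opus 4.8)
The plan is to write out both the MBO estimator and the barycentric projection explicitly for a quadratic cost $h(\bz) = \tfrac12 \bz^\top B \bz$ and check they coincide entry by entry. Recall that the barycentric projection of an entropic coupling $P^\star$ sends $\bx_i$ to $\sum_j \bp_j(\bx_i) \by_j$, where $\bp(\bx)$ is the probability vector appearing in Definition~\ref{def:MBO}; more precisely, for a general point $\bx$ one uses the smoothed potential to define $\bp(\bx) = \nabla\!\mymin_\eps([h(\bx-\by_j) - \bg^\star_j]_j)$ and declares the barycentric projection to be $\bx \mapsto \sum_j \bp_j(\bx)\by_j$. The first step is therefore to recall this definition and fix notation.

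Next I would specialize the elastic-cost formula \eqref{eq:mbo} to the case $\gamma\tau(\bz) = \tfrac12 \bz^\top(B - I)\bz$ — i.e., the quadratic regularizer that turns $h = \tfrac12\ell_2^2 + \gamma\tau$ into $h(\bz) = \tfrac12\bz^\top B\bz$. The key computational inputs are: (a) $\nabla\tau(\bz) = \gamma^{-1}(B-I)\bz$, so $\gamma\nabla\tau(\bx-\by_j) = (B-I)(\bx-\by_j)$; and (b) the proximal operator of $\gamma\tau$, which for this quadratic is the linear map $\prox_{\gamma\tau}(\bw) = B^{-1}\bw$, since $\argmin_\bz \tfrac12\|\bw-\bz\|^2 + \tfrac12\bz^\top(B-I)\bz$ solves $(\bw - \bz) = (B-I)\bz$, giving $\bz = B^{-1}\bw$. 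Substituting these into \eqref{eq:mbo}, the argument of the prox becomes $\bx + \sum_j \bp_j(\bx)\bigl((B-I)(\bx-\by_j) - \by_j\bigr) = \bx + (B-I)\bx - B\sum_j \bp_j(\bx)\by_j$ (using $\sum_j \bp_j(\bx) = 1$), which equals $B\bx - B\sum_j\bp_j(\bx)\by_j$. Applying $\prox_{\gamma\tau} = B^{-1}$ then yields $T_\eps(\bx) = \bx - \bigl(\bx - \sum_j \bp_j(\bx)\by_j\bigr) = \sum_j \bp_j(\bx)\by_j$, which is exactly the barycentric projection of $P^\star$.

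There is no serious obstacle here; the only things to be careful about are (i) confirming that $\bp(\bx)$ as used in the MBO formula is genuinely the conditional law $P^\star_{i,\cdot}/\sum_j P^\star_{i,j}$ at sample points (this follows from \eqref{eq:finiteprimal}, \eqref{eq:entpotentials}, and the definition of $\nabla\!\mymin_\eps$), and (ii) bookkeeping the constant term cleanly using $\mathbf 1^\top \bp(\bx) = 1$. I would also note that positive-definiteness of $B$ is exactly what guarantees $\gamma\tau$ is convex (hence the elastic-cost machinery of \S\ref{sec:back} applies) and that $B^{-1}$ exists, so the prox is well-defined. The lemma then follows, and combined with \Cref{prop: opt_characterization} (taking $B = (1+\gamma)I - \gamma A^\top A = W^2$) it reduces the subspace-cost estimation problem to the $\ell_2^2$ case, as needed for \Cref{thm: stat_complexity_main}.
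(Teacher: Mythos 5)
Your proof is correct and follows essentially the same route as the paper's: both arguments boil down to the observation that for a quadratic cost $\nabla h^\ast = \prox_{\gamma\tau} = B^{-1}$, so that applying it to the gradient of the entropic potential cancels the factors of $B$ and leaves the conditional mean $\sum_j \bp_j(\bx)\by_j$ — the paper writes this with the conditional entropic coupling $\pi^x_\eps$, you with the finite-sample formula \eqref{eq:mbo}, which is the same computation in discrete notation. One small side remark to correct: positive definiteness of $B$ does \emph{not} make $\gamma\tau(\bz)=\tfrac12\bz^\top(B-I)\bz$ convex (take $B=\tfrac12 I$); what it guarantees is that the prox objective $\tfrac12\|\bw-\bz\|^2+\tfrac12\bz^\top(B-I)\bz$ is strictly convex, so $\prox_{\gamma\tau}(\bw)=B^{-1}\bw$ is still well defined, and in the case actually used later, $B=(1+\gamma)I-\gamma A^\top A$, one does have $B-I=\gamma(I-A^\top A)\succeq 0$.
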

\begin{proof}
    Note that $h^*(w) = \tfrac12 w^\top B^{-1} w$, and thus $\nabla h^*(w) = B^{-1}w$. Let $(f_\eps,g_\eps)$ denote the optimal entropic potentials for this cost, with corresponding coupling $\pi_\eps$. Borrowing computations from \Cref{app: appendix_displacement}, we know that
    \begin{align*}
        \nabla f_\eps(x) = \int B(x-y) {\rm{d}} \pi_\eps^x(y) = Bx - B \int y {\rm{d}} \pi_\eps^x(y)\,,
    \end{align*}
    where $\pi_\eps^x(y)$ is the conditional entropic coupling (given $x$). The proof concludes by taking the expression of the MBO estimator and expanding:
    \begin{align*}
        T_\eps(x) = x - (\nabla h^*) \circ (\nabla f_\eps(x)) = x - B^{-1}\left(Bx - B \int y {\rm{d}} \pi_\eps^x(y) \right) = \int y {\rm{d}} \pi_\eps^x(y)\,,
    \end{align*}
    which is the definition of the barycentric projection of $\pi_\eps$ for a given $x$.
\end{proof}

\begin{lem}[Pre-conditioning of MBO]\label{lem: mbo_precond}
    Let $T_\eps$ be the MBO estimator between $\mu$ and $\nu$ for the cost $h = \tfrac12 \ell^2 + \gamma \tau_{A^\perp}$. Let $W$ be denoted as in \Cref{prop: opt_characterization}. Then the MBO estimator is written as 
    \begin{align}\label{eq: mbo_precond}
        T_\eps = W^{-1} \circ \tilde{T}_\eps \circ W\,,
    \end{align}
    where $\tilde{T}_\eps$ is the barycentric projection between $W_\sharp \mu$ and $W_\sharp \nu$.
\end{lem}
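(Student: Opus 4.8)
The plan is to combine the two structural facts we already have: Lemma~\ref{lem: MBO_barycenter}, which says that for any quadratic cost $h(z) = \tfrac12 z^\top B z$ the MBO estimator coincides with the barycentric projection of the optimal entropic coupling, and the ``change of variables'' computation underlying Proposition~\ref{prop: opt_characterization}, which says that transporting $\mu,\nu$ for the cost $h = \tfrac12\ell^2 + \gamma\tau_{A^\perp}$ is the same as transporting $W_\sharp\mu, W_\sharp\nu$ for the plain quadratic cost $\tfrac12\|\cdot\|^2$. The first step is to recall that $h(z) = \tfrac12 z^\top B z$ with $B = (1+\gamma)I - \gamma A^T A = W^\top W$ (using $AA^T = I$ and the identities from \S\ref{sec:subspace}), so $h$ is indeed an admissible quadratic cost for Lemma~\ref{lem: MBO_barycenter}, and its MBO estimator $T_\eps$ is the barycentric projection $x \mapsto \int y\,{\rm d}\pi_\eps^x(y)$ of the entropic coupling $\pi_\eps$ of $(\mu,\nu)$ for cost $h$.

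The second step is to identify $\pi_\eps$ with the pushforward under $(W,W)$ of the entropic coupling $\tilde\pi_\eps$ of $(W_\sharp\mu, W_\sharp\nu)$ for the quadratic cost. This follows because the entropic problem $\min_{\pi\in\Gamma(\mu,\nu)} \int \tfrac12\|Wx - Wy\|^2 {\rm d}\pi + \eps \mathrm{KL}(\pi\|\mu\otimes\nu)$ is mapped, under the bijective change of variables $(x,y)\mapsto(Wx,Wy)$, exactly onto $\min_{\pi'\in\Gamma(W_\sharp\mu,W_\sharp\nu)} \int \tfrac12\|x'-y'\|^2{\rm d}\pi' + \eps\mathrm{KL}(\pi'\|W_\sharp\mu\otimes W_\sharp\nu)$; the KL term is invariant under a common invertible pushforward on both marginals, and the cost term transforms as in~\eqref{eq: pre_cond_problem}. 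Hence $\pi_\eps = (W\times W)_\sharp \tilde\pi_\eps$, and consequently the disintegrations are related by $\pi_\eps^x = W_\sharp(\tilde\pi_\eps^{Wx})$.

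The third step is the elementary barycenter computation: since $\pi_\eps^x = W_\sharp \tilde\pi_\eps^{Wx}$ and $W$ is linear,
\begin{align*}
T_\eps(x) = \int y\, {\rm d}\pi_\eps^x(y) = \int W^{-1} y'\, {\rm d}\tilde\pi_\eps^{Wx}(y') = W^{-1}\left(\int y'\,{\rm d}\tilde\pi_\eps^{Wx}(y')\right) = W^{-1}\bigl(\tilde T_\eps(Wx)\bigr),
\end{align*}
where $\tilde T_\eps$ is by definition the barycentric projection of $\tilde\pi_\eps$, i.e.\ the barycentric projection between $W_\sharp\mu$ and $W_\sharp\nu$. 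This gives exactly~\eqref{eq: mbo_precond}. The only genuinely delicate point is the second step --- making the invariance of the entropic coupling under the linear change of variables precise, in particular checking that the entropic regularizer (relative entropy against the product of marginals) is preserved and that optimal couplings correspond bijectively. Since $W$ is a fixed invertible linear map this is standard, but it is the step that requires care; everything else is bookkeeping built on results already established in the excerpt.
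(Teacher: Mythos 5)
Your proposal is correct and follows essentially the same route as the paper's proof: invoke Lemma~\ref{lem: MBO_barycenter} to write the MBO estimator for the quadratic cost $\tfrac12 z^\top W^\top W z$ as a barycentric projection, relate the entropic couplings of $(\mu,\nu)$ and $(W_\sharp\mu, W_\sharp\nu)$ via the change of variables $(x,y)\mapsto(Wx,Wy)$ (with the KL term preserved), and pull the linear map $W^{-1}$ out of the conditional expectation. The only difference is that you make explicit the invariance of the KL regularizer under the joint invertible pushforward, which the paper uses implicitly.
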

\begin{proof}
The proof here is similar to \Cref{prop: opt_characterization}, which we outline again for completeness. As before, we are interested in solutions to the optimization problem
\begin{align*}
    \min_{\pi \in \Gamma(\mu,\nu)} \int \tfrac12\|Wx - Wy\|^2 {\rm{d}}\pi(x,y) + \eps \text{KL}(\pi\|\mu \otimes \nu)\,,
\end{align*}
with optimal coupling $\pi_\eps^\star$. Performing a change of variables $\pi' = (W \otimes W)_\sharp \pi$, we have
\begin{align*}
        \min_{\pi' \in \Gamma(\mu',\nu')} \int \tfrac12\|x - y\|^2 {\rm{d}}\pi'(x,y) + \eps \text{KL}(\pi'\|\mu' \otimes \nu')\,,
\end{align*}
where $\mu' := W_\sharp \mu$ (and similarly for $\nu'$), where now the optimizer reads $(\pi_\eps')^\star$. The two optimal plans are related as
\begin{align*}
    \pi^\star = (W^{-1} \otimes W^{-1})_\sharp (\pi_\eps')^\star\,.
\end{align*}
It was established in \Cref{lem: MBO_barycenter} that the MBO estimator $T^\star_\eps$ is given by the barycentric projection
\begin{align*}
    T_\eps^\star(x) = \mathbb{E}_{\pi_\eps^\star}[Y|X=x]\,.
\end{align*}
Performing the change of variables $Y' = WY$ and $X' = WX$, we can re-write this as a function of $\pi'_\eps$ instead:
\begin{align*}
    T^\star_\eps(x) &= \mathbb{E}_{\pi_\eps^\star}[Y|X=x] \\
    &= \mathbb{E}_{(\pi_\eps')^\star}[W^{-1}Y' | W^{-1}X' = x] \\
    &= W^{-1}\mathbb{E}_{(\pi_\eps')^\star}[Y' | X' = Wx] \\
    &= W^{-1} \tilde{T}_\eps(Wx)\,,
\end{align*}
where we identify $\tilde{T}_\eps(\cdot) := \mathbb{E}_{(\pi_\eps')^\star}[Y' | X' = \cdot]$; this completes the proof.
\end{proof}
We are now ready to present the main proof.
\begin{proof}[Proof of \Cref{thm: stat_complexity_main}] 
Let $T_{\eps,n}$ denote the MBO estimator between samples from $\mu$ and $\nu$, and let $\tilde{T}_{\eps,n}$ denote the entropic map estimator from samples $\mu' := W_\sharp \mu$ and $\nu' := W_\sharp \nu$, where $W$ has spectrum $0 < \lambda_{\min}(W) \leq \lambda_{\max}(W) < +\infty$, where we have access to $W$ since $A$ is known.

Our goal is to establish upper bounds on 
\begin{align*}
    \| T_{\eps,n} - T^\star\|^2_{L^2(\mu)} = \| W^{-1} \circ (\tilde{T}_{\eps,n} \circ W - \tilde{T} \circ W)\|^2_{L^2(\mu)}\,.
\end{align*}
Paying for constants that scale like $\lambda_{\max}(W^{-1})$, we have the bound
\begin{align*}
    \| T_{\eps,n} - T^\star\|^2_{L^2(\mu)} \lesssim_W \|\tilde{T}_{\eps,n} - \tilde{T}\|^2_{L^2(\mu')}\,,
\end{align*}
where we can now directly use the rates of convergence from \cite[Theorem 3]{pooladian2021entropic}, as $\mu'$ satisfies our regularity assumptions under the conditions we have imposed on $W$. this completes the proof.
\end{proof}

\section{Gradient of Elastic Cost Loss}\label{sec:explainloss}
The gradient of the loss $\mathcal{L}$ in \ref{defn:loss} can be recovered through a simple aggregation of weighted gradients
\begin{equation}
\label{eq:chain_rule_simple}
    \nabla \mathcal{L}(\theta) = \sum_{ij} \left[R(\theta)\right]_{ij} \nabla_\theta\left[P^\star(\theta)\right]_{ij} + \left[P^\star(\theta)\right]_{ij} \nabla_\theta \left[R(\theta)\right]_{ij}. 
\end{equation}
To write this formula in a more compact way, it is sufficient to notice that, adopting the convention that $\theta$ be a parameter in $\mathbb{R}^q$, and introducing an arbitrary vector $\omega\in\mathbb{R}^q$,

$$
\begin{aligned}
    \langle\nabla \mathcal{L}(\theta),\omega\rangle &= \sum_{ij} \left[R(\theta)\right]_{ij} \langle \nabla_\theta\left[P^\star(\theta)\right]_{ij},\omega\rangle + \left[P^\star(\theta)\right]_{ij} \langle\nabla_\theta \left[R(\theta)\right]_{ij},\omega\rangle\\
    &= \langle R(\theta), \left[\langle\nabla_\theta\left[P^\star(\theta)\right]_{ij},\omega\rangle\right]_{ij} \rangle + \langle P^\star(\theta), \left[\langle\nabla_\theta\left[R(\theta)\right]_{ij},\omega\rangle\right]_{ij} \rangle\,. 
\end{aligned}
$$
The products of all coordinate wise gradients with $\omega$ is equivalent to the application of the Jacobians of $R$ and $P^\star$. 
We write $J_\theta P^\star$ and $J_\theta R$ for these Jacobians, both being maps taking $\theta$ as input, and outputting a linear map $J_\theta R: \mathbb{R}^{q}\rightarrow \mathbb{R}^{n\times m}$, i.e. $J_\theta R(\theta)$ is a $n\times m$ matrix. As a consequence one has
$$
\begin{aligned}
    \langle\nabla \mathcal{L}(\theta),\omega\rangle &= \langle R(\theta), J_\theta P^\star(\theta)\,\omega \rangle + \langle P^\star(\theta), J_\theta R(\theta)\,\omega \rangle 
\end{aligned}
$$
because these maps are linear, one also has
$$
\begin{aligned}
    \langle\nabla \mathcal{L}(\theta),\omega\rangle &= \langle J_\theta^T P^\star(\theta) R(\theta), \omega \rangle + \langle J_\theta^T R(\theta) P^\star(\theta), \omega \rangle \\
    &= \langle J_\theta^T P^\star(\theta) R(\theta) + J_\theta^T R(\theta) P^\star(\theta), \omega \rangle
\end{aligned}
$$
which gives the identification given in the main text.

\section{Additional Details on Experiments}\label{app:exps}

In \S~\ref{subsec:learningsub}, and unlike Figure~\ref{fig:mse}, we do not choose a predefined value for $\gamma^\star$, but instead select it with the following procedure: we start with a small value for $\gamma_0=0.1$, and increase it gradually, until a certain desirable criterion on these displacements goes above a threshold. To measure this, we first compute the (paired) matrix of displacements on a given sample,
$$D = [T_{g}^h(\bx_i) - \bx_i]_i \in\R^{n\times d}$$

We then consider ratio of singular values on $p^*$ subspace (to select $\gamma$ for $\|A^\perp\cdot\|^2_2$), writing $\sigma$ for the vector of singular values of $D$, ranked in decreasing order, to compute \begin{equation}\label{eq:svr}\textrm{sv-ratio}(\gamma) =  \sum_{i=1}^p \sigma_i / \sum_i \sigma_i \in [0,1]\,.
\end{equation}

\end{document}